\documentclass{article}
\usepackage[preprint]{conference}
\usepackage{times}

\usepackage{amsmath,amsfonts,bm}

\def\eqref#1{equation~\ref{#1}}

\def\1{\bm{1}}

\DeclareMathAlphabet{\mathsfit}{\encodingdefault}{\sfdefault}{m}{sl}
\SetMathAlphabet{\mathsfit}{bold}{\encodingdefault}{\sfdefault}{bx}{n}

\usepackage{graphicx}
\usepackage{subcaption}
\usepackage{algorithm}
\usepackage{algorithmic}
\usepackage{amsmath}
\usepackage{amssymb}
\usepackage{amsfonts}
\usepackage{amsthm}
\usepackage{booktabs}
\usepackage{multirow}
\usepackage{float}
\usepackage{placeins}
\usepackage{xcolor}
\usepackage[hidelinks]{hyperref}
\usepackage{url}

\newtheorem{proposition}{Proposition}

\title{SymRegFlow: Symmetry-Regularized Flow Matching for Video World Models}

\author{%
\textbf{Xi Ye\textsuperscript{1} \quad Yuzhu Wang\textsuperscript{1} \quad
Xiaoyang Liu\textsuperscript{1,5} \quad Jiayi Wang\textsuperscript{2,3}}\\
\textbf{Yangyang Xu\textsuperscript{1} \quad Ruyu Wang\textsuperscript{2,3} \quad
Wenlin Chen\textsuperscript{2,4} \quad Duo Su\textsuperscript{1} \quad Jun Zhu\textsuperscript{1}}\\[0.5em]
\parbox[t]{\dimexpr\textwidth-2\tabcolsep\relax}{\normalfont\raggedright
\textsuperscript{1}Dept. of Comp. Sci. and Tech., Tsinghua-Bosch Joint ML Center, Tsinghua University\\
\textsuperscript{2}Bosch Center for Artificial Intelligence (BCAI)\\
\textsuperscript{3}Robert Bosch GmbH\\
\textsuperscript{4}Bosch (China) Investment Ltd.\\
\textsuperscript{5}University of Chinese Academy of Sciences
}
}

\hypersetup{
  pdftitle={SymRegFlow: Symmetry-Regularized Flow Matching for Video World Models},
  pdfauthor={Xi Ye, Yuzhu Wang, Xiaoyang Liu, Jiayi Wang, Yangyang Xu, Ruyu Wang, Wenlin Chen, Duo Su, Jun Zhu},
  pdfsubject={Video world models},
  pdfkeywords={flow matching, video world models, symmetry regularization}
}

\begin{document}

\maketitle

\begin{abstract}
Flow-matching-based multi-view world models generate realistic videos, but are commonly
restricted to fixed camera rigs. Extending them to continuously varying camera poses requires paired
pose--video observations with dense pose coverage, which are costly to acquire. We introduce
\emph{SymRegFlow}, a symmetry-regularized flow-matching framework for multi-view-consistent video
generation across continuous viewpoints without ground-truth novel-view RGB supervision.
For each target pose, SymRegFlow geometrically warps source views into noisy anchors and combines
masked dual-anchor supervision with cross-anchor denoising-output consistency to mitigate
anchor-specific errors. Under an affine Gaussian surrogate, we prove that suitable consistency
regularization recovers the clean-reference optimum at fixed noise levels, strictly outperforming
single- and merged-anchor baselines. Experiments on Cosmos-Drive-Dreams and nuScenes demonstrate
high-quality, multi-view-consistent autonomous-driving video generation: on nuScenes, SymRegFlow
achieves the lowest FVD and FVMD among the evaluated baselines, reducing FVD by over 31\% relative
to the best baseline, and source-conditioned inference also attains the best FID and instance
preservation.
\end{abstract}

\section{Introduction}
Video world models have become powerful generative simulators of dynamic environments.
In autonomous driving, recent systems jointly synthesize the RGB streams of a surround-view
camera rig, conditioned on maps, object layouts, and text
\citep{magicdrive,vista,cosmos}. Joint multi-view learning can implicitly capture spatial
consistency across views. We refer to the desired property as video $\mathrm{SE}(3)$ symmetry:
under a rigid camera-pose change $g\in\mathrm{SE}(3)$, the scene content and dynamics remain
unchanged, while the generated RGB video transforms consistently with the induced reprojection on
co-visible regions \citep{dupont2020equivariant,equivlightfield}. Because $\mathrm{SE}(3)$ is
continuous, this consistency should extend across camera poses; current models, however, support
only the finite viewpoints represented by their training rigs.

Continuous-view generation supports sensor simulation, closed-loop autonomy evaluation
\citep{unisim}, training-data augmentation \citep{neurad}, and cinematographic camera control
\citep{cameractrl,recammaster}. However, nuScenes and Waymo Open \citep{nuscenes,waymo} provide
synchronized videos only from fixed calibrated cameras, leaving intermediate viewpoints and
novel camera heights or baselines without paired targets. ReCamMaster and SynCamMaster obtain
synchronized multi-camera training videos from Unreal Engine \citep{recammaster,syncammaster},
incurring high generation costs, a sim-to-real gap, and unmodeled simulator artifacts.

To avoid simulator-generated supervision, we use off-the-shelf 3D perception models to
warp observed source views to sampled novel poses, producing noisy anchor videos affected by
geometry errors, occlusions, and resampling. We introduce \emph{SymRegFlow}, which combines
masked dual-anchor supervision with cross-anchor denoising consistency under shared diffusion
noise, while retaining observed-view supervision to preserve pretrained generative capabilities.
This symmetry inductive bias reduces sensitivity to anchor-specific errors without
requiring ground-truth novel-view videos.

SymRegFlow is a fine-tuning objective that keeps the flow-matching training and inference framework.
Our implementation adds continuous-view conditioning, directed cross-view attention (DCVA), and
source-latent replacement for source-conditioned inference. Our contributions are:
\begingroup
\setlength{\leftmargini}{1.5em}
\begin{itemize}
    \item We propose SymRegFlow for cross-view-consistent video generation across continuous
    viewpoints without novel-view RGB supervision, using masked dual-anchor learning and
    cross-anchor denoising consistency.
    \item We prove that, under an affine Gaussian surrogate, suitable consistency regularization
    counteracts anchor-induced variance inflation and recovers the clean-reference optimum at each
    fixed noise level, strictly improving upon single- and merged-anchor training.
    \item We show that on the nuScenes midpoint-view benchmark, SymRegFlow achieves the lowest FVD
    and FVMD among the evaluated baselines (over 31\% lower FVD than the best baseline), while
    source-conditioned inference also attains the best FID and instance preservation.
\end{itemize}
\endgroup

\section{Method}

\subsection{Preliminaries}
\label{sec:preliminaries}
\paragraph{Symmetry and equivariance.}
A symmetry is a group transformation that leaves an object or distribution unchanged. Let
$\mathcal X$ and $\mathcal Y$ denote the input and output spaces, and let
$\rho_{\mathcal X}(g)$ and $\rho_{\mathcal Y}(g)$ denote how a group element $g$ acts on each
space. A map $\mathcal F\colon\mathcal X\rightarrow\mathcal Y$ is equivariant if it commutes with these
actions: $\mathcal F(\rho_{\mathcal X}(g)x)=\rho_{\mathcal Y}(g)\mathcal F(x)$. Invariance is the special case in which
$\rho_{\mathcal Y}(g)$ is the identity \citep{bronstein2021gdl}. Thus, symmetry specifies the
structure to preserve, while equivariance specifies how a model preserves it. For the video
$\mathrm{SE}(3)$ symmetry defined in the introduction, $\rho_{\mathcal X}(g)$ changes the requested
camera pose and $\rho_{\mathcal Y}(g)$ is the induced RGB reprojection. The scene content and
dynamics remain invariant, but its observation changes; the generator should therefore be
camera-pose equivariant on co-visible regions
\citep{dupont2020equivariant,equivlightfield}.

\paragraph{Multi-view flow matching.}
Let $\mathbf{x}=(x^1,\ldots,x^V)$ be a clean, synchronized $V$-view video, where
$x^v\in\mathbb R^{F\times H\times W\times 3}$. Here, $F$, $H$, and $W$ denote the number of frames, frame height, and frame width,
respectively. Let
$\mathcal C=\{(\mathbf T^v,K^v)\}_{v=1}^V$ collect camera poses
$\mathbf T^v\in\mathrm{SE}(3)$ and intrinsics $K^v$. The multimodal condition is
$\mathbf c=(\mathcal C,c_{\mathrm{text}},c_{\mathrm{map}},c_{\mathrm{box}},c_{\mathrm{act}},\ldots)$,
including any available text, maps, boxes, or actions. Given
$\mathbf x\sim p_{\mathrm{data}}(\cdot\mid\mathbf c)$, Gaussian noise
$\boldsymbol\epsilon\sim\mathcal N(0,I)$, and a noise level $t\sim p_{\mathrm{train}}$ drawn from
the training scheduler's distribution on $(0,1)$, the forward path corrupts the video as
$\mathbf Z_t=(1-t)\mathbf x+t\boldsymbol\epsilon$, with
$\mathbf Z_t=(Z_t^1,\ldots,Z_t^V)$, $\mathbf Z_0=\mathbf x$, and
$\mathbf Z_1=\boldsymbol\epsilon$. We denote the denoising output by
$\widehat{\mathbf Z}_{0,\theta}$ and express velocity-matching supervision in clean-prediction space
\citep{li2025backtobasics}:
\begin{equation}
\mathcal L_x=\mathbb E\left[\frac{1}{t^2}
\big\|\widehat{\mathbf Z}_{0,\theta}(\mathbf Z_t,\mathbf c,t)-\mathbf x\big\|_2^2\right].
\label{eq:x-prediction}
\end{equation}
Its component $\widehat Z_{0,\theta}^{v}$ is the denoising output for view $v$.
The corresponding flow velocity is
$\mathbf u_\theta=(\mathbf Z_t-\widehat{\mathbf Z}_{0,\theta})/t$ for $t>0$
\citep{fmatching}. Starting from $\mathbf Z_1\sim\mathcal N(0,I)$, we integrate
$d\mathbf Z_t/dt=\mathbf u_\theta$ backward from $t=1$ to $0$.

Large video world models instead encode each view with a frozen video autoencoder
$\mathcal E,\mathcal D$, yielding clean latents $X^v=\mathcal E(x^v)$ and the multi-view tuple
$\mathbf X=(X^1,\ldots,X^V)$. The latent forward path is
$\mathbf Z_t=(1-t)\mathbf X+t\boldsymbol\epsilon$, with $\mathbf Z_0=\mathbf X$. Sampled clean latents
$\widehat X^v=\widehat Z_0^v$ are decoded as
$\widehat x^v=\mathcal D(\widehat X^v)$ \citep{rombach2022,cosmos}.

\begin{figure*}[t]
\centering
\includegraphics[width=\textwidth]{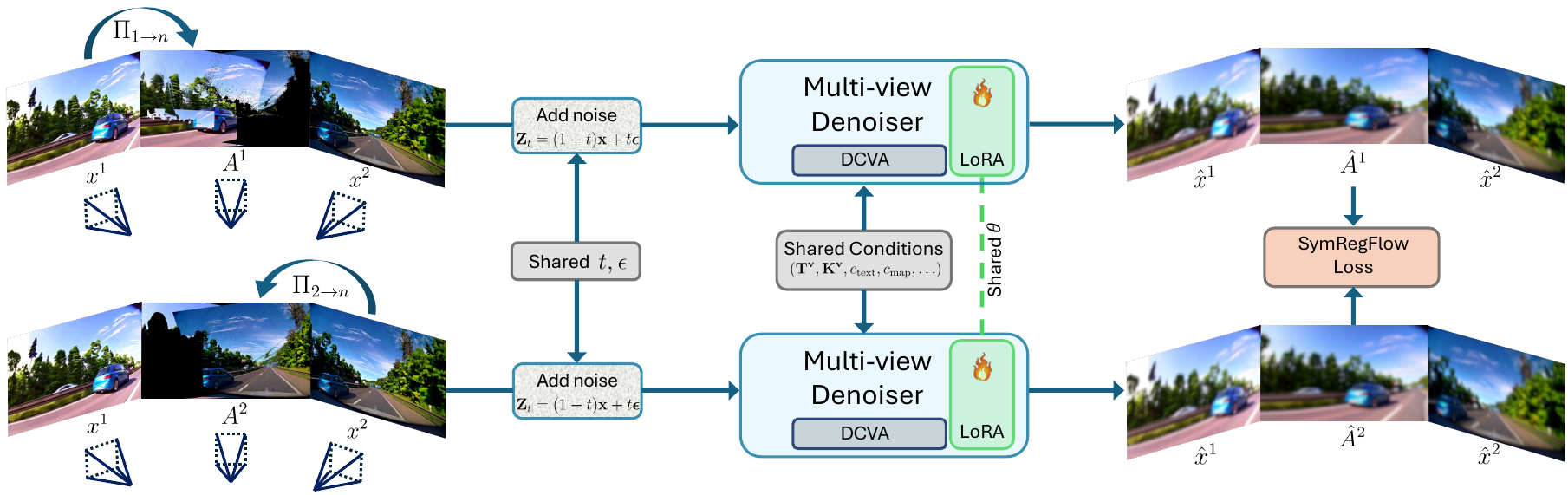}
\caption{SymRegFlow pipeline. Two observed videos $x^1,x^2$ are independently warped to the same requested novel view, yielding noisy anchors $A^1,A^2$. The resulting multi-view pseudo-examples are corrupted with shared noise and fed into the same denoiser with directed cross-view attention (DCVA). VAE of latent flow-matching is omitted from the drawing for clarity.}
\label{fig:symregflow_model}
\end{figure*}

\subsection{SymRegFlow from Noisy Multi-View Anchors}
\label{sec:equiv}
\paragraph{Problem setting.}
Let $x^1$ and $x^2$ denote synchronized ground-truth RGB videos collected at two available
discrete camera views, with poses $\mathbf T^s\in\mathrm{SE}(3)$ and intrinsics $K^s$ for
$s\in\mathcal S=\{1,2\}$. We assume a multi-view flow matching model pretrained on these views.
Our goal is to learn the conditional generator $p(x^1,x^2,x^{\mathrm n}\mid\mathbf T^{\mathrm n},K^{\mathrm n})$ for a
continuous novel view $\mathrm n$, specified by camera pose $\mathbf T^{\mathrm n}\in\mathrm{SE}(3)$ and intrinsics
$K^{\mathrm n}$, by fine-tuning the pretrained model without ground-truth novel-view RGB $x^{\mathrm n}$; the formulation
generalizes naturally to more observed views. For this requested novel view, we encode
$(\mathbf T^{\mathrm n},K^{\mathrm n})$ with a learned embedder $e_\phi$, typically using the pretrained model's
existing pose-conditioning mechanism. Here, $\phi$ denotes the
embedder parameters, learned end-to-end under the SymRegFlow objective defined below.

\paragraph{Geometry-derived noisy anchors.}
Because ground-truth RGB $x^{\mathrm n}$ is unavailable, we form a pseudo-target from each source
using an efficient off-the-shelf 3D perception system such as ViPE \citep{vipe}. We use its
estimated dense depth and camera geometry to lift the source RGB into a colored 3D point cloud,
then project and render it at $(\mathbf T^{\mathrm n},K^{\mathrm n})$. Denoting this pipeline by
$\Pi_{s\rightarrow \mathrm n}$, we obtain
\begin{equation}
(A^s,m^s)=\Pi_{s\rightarrow \mathrm n}(x^s),\qquad s\in\mathcal S.
\label{eq:anchor-corruption}
\end{equation}
Here, $A^s$ is a partial novel-view anchor and $m^s$ its visibility/confidence mask. Writing
$M^s=\operatorname{diag}(m^s)$, its visible pixels satisfy
$M^sA^s=M^s(x^{\mathrm n}+E^s)$, where $E^s$ is the aggregate warping error. Geometry errors,
scene dynamics, and occlusions make the anchors noisy and incomplete. In general, the magnitude
of $E^s$ grows with source--target viewpoint distance, as illustrated by the rotation-dependent
$\mathrm{SE}(2)$ example in Fig.~\ref{fig:toy-study}\subref{fig:toy-anchors}.

\paragraph{Symmetry-regularized fine-tuning.}
Dual-anchor fine-tuning constructs one pseudo-example
$\widetilde{\mathbf x}^{(s)}=(x^1,A^s,x^2)$ for each source $s\in\mathcal S$.
The anchors describe the same target view with different valid supports. Camera-pose equivariance
motivates consistency on shared visible content; we extend this motivation to cross-anchor
denoising agreement over regions supported by either anchor.

To implement this regularizer, we apply the forward corruption from Sec.~\ref{sec:preliminaries} to each
pseudo-example, yielding $\mathbf Z_t^{(s)}$, and predict its clean multi-view output under
$\mathbf c^{\mathrm n}=(\mathbf c,e_\phi(\mathbf T^{\mathrm n},K^{\mathrm n}))$:
\begin{equation}
\widehat{\mathbf Z}_{0,\theta}^{(s)}
=f_\theta\!\left(\mathbf Z_t^{(s)},\mathbf c^{\mathrm n},t\right),
\qquad s\in\mathcal S.
\label{eq:paired-forward}
\end{equation}
Writing $\widehat Z^{v,(s)}:=\widehat Z_{0,\theta}^{v,(s)}$ and
$M^{\cup}=\operatorname{diag}(\max(m^1,m^2))$, we define the cross-anchor consistency regularization loss over regions
supported by either anchor:
\begin{equation}
\mathcal L_{\mathrm{cons}}
 =\big\|\widehat Z^{\mathrm n,(1)}-\widehat Z^{\mathrm n,(2)}\big\|_{M^{\cup}}^2.
\label{eq:symregflow-consistency}
\end{equation}
The union mask includes locations supported by only one anchor, coupling the two predictions
through the available anchor supervision. Locations unsupported by both anchors receive no
consistency penalty.
At each optimization step, the two forward passes share the sampled slot-wise Gaussian noise, noise
level $t$, condition $\mathbf c^{\mathrm n}$, and observed-view inputs (Fig.~\ref{fig:symregflow_model}).

To preserve the pretrained model's capabilities, we retain supervision on all observed views;
to learn the novel-pose embedder $e_\phi$, we supervise each novel-view output with its
geometry-derived anchor:
\begin{align}
\mathcal L_{\mathrm{src}}
&=\frac{1}{2}\sum_{s\in\mathcal S}\sum_{r\in\mathcal S}
\big\|\widehat Z^{r,(s)}-x^r\big\|_2^2,\nonumber\\
\mathcal L_{\mathrm{anc}}
&=\sum_{s\in\mathcal S}\big\|\widehat Z^{\mathrm n,(s)}-A^s\big\|_{M^s}^{2}.
\label{eq:symregflow-components}
\end{align}
The complete objective is
\begin{equation}
\mathcal L=\mathbb E\left[
\frac{1}{t^2}\left(\mathcal L_{\mathrm{src}}+\mathcal L_{\mathrm{anc}}\right)
+\lambda_c\mathcal L_{\mathrm{cons}}\right].
\label{eq:symregflow-objective}
\end{equation}
We backpropagate the combined loss once through both branches, with $\lambda_c$ weighting
consistency. The supervision and consistency objectives extend to additional observed views. At inference, the novel camera is specified through
$e_\phi(\mathbf T^{\mathrm n},K^{\mathrm n})$, without anchors or geometric warping.

\paragraph{Directed Cross-View Attention.}
To protect observed-view fidelity and multi-view consistency from noisy anchors, Directed
Cross-View Attention (DCVA) restricts observed-view queries to observed-view tokens while allowing
novel-view queries to attend to all tokens. Parameter-free masks enforce this observed-to-novel
information flow in every full-attention layer. Sec.~\ref{sec:ablations} evaluates its trade-off
between view fidelity, cross-view agreement, and computational cost.

Appendix~\ref{app:algorithms} gives the two-pass, three-view training procedure in
Algorithm~\ref{alg:symregflow-training}, including the video backbone's prediction parameterization.
Algorithms~\ref{alg:symregflow-nvs} and~\ref{alg:symregflow-generate} describe
\emph{source-cond.} inference (novel-view synthesis conditioned on observed source RGB) and
\emph{joint-generated} inference (joint three-view generation from noise), respectively.

\subsection{$\mathrm{SE}(2)$ Toy Example}
\label{sec:toy-exp}
We illustrate SymRegFlow in an $\mathrm{SE}(2)$ MNIST setting with observed views $x^1$ and $x^2$
at $-90^\circ$ and $0^\circ$, respectively. A pose $\alpha\in(0,1)$ specifies the novel-view angle
$\vartheta_\alpha=-90^\circ(1-\alpha)$; the exact target $x^{\mathrm n}$ is withheld during
fine-tuning and used only for validation. The distortion of anchors $A^1$ and $A^2$ increases with
their respective source--target angular distances, simulating viewpoint-dependent warping errors
(Fig.~\ref{fig:toy-study}\subref{fig:toy-anchors}). We pretrain a pixel-space multi-view
flow-matching model on the observed views and fine-tune it with these anchors under
Eq.~(\ref{eq:symregflow-objective}). We compare against single-anchor baselines ($A^1$-only and
$A^2$-only) and Merged anchor, which uses their mask-weighted fusion $A^{\mathrm M}$.
Appendix~\ref{app:toy} details anchor construction and training.

\begin{figure}[t]
\centering
\captionsetup[subfigure]{skip=2pt}
\begin{subfigure}[b]{0.49\textwidth}
\centering
\includegraphics[width=\linewidth]{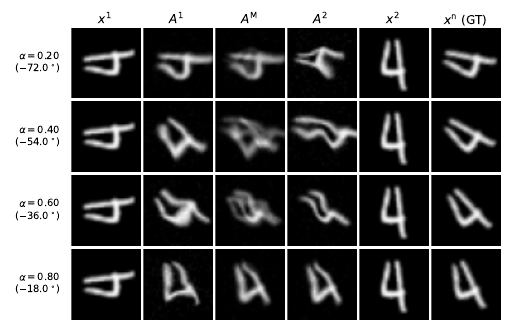}
\caption{}
\label{fig:toy-anchors}
\end{subfigure}\hfill
\begin{subfigure}[b]{0.49\textwidth}
\centering
\includegraphics[width=\linewidth,trim=0 8.6bp 0 0,clip]{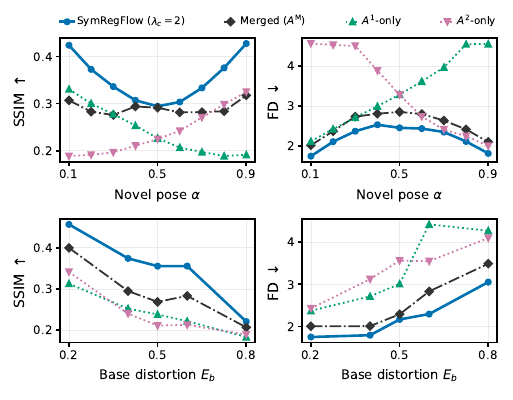}
\caption{}
\label{fig:toy-summary}
\end{subfigure}
\caption{$\mathrm{SE}(2)$ MNIST toy study.
\textbf{(a)} Viewpoint-dependent noisy anchors $A^1,A^2$, their mask-weighted fusion $A^{\mathrm M}$,
and exact target $x^{\mathrm n}$.
\textbf{(b)} SSIM and FD versus novel pose $\alpha$ (top) and base distortion $E_b$ (bottom),
averaged over five distortion levels and nine poses, respectively. FD denotes pixel-space
Fr\'echet distance on flattened $16\times16$ pixel features.}
\label{fig:toy-study}
\end{figure}

\paragraph{Toy-example analysis.}
We evaluate nine withheld poses at five base distortion levels
$E_b\in\{0.2,0.4,0.5,0.6,0.8\}$. All methods start from the same checkpoint, are fine-tuned without
$x^{\mathrm n}$, and are evaluated against exact rotations of held-out digits; Appendix~\ref{app:toy}
provides training and evaluation details and the consistency-weight sweep. SymRegFlow uses DCVA with $\lambda_c=2$, selected by its mean
exact-target pixel-space Fr\'echet distance (FD) over the pose--distortion grid. As shown in
Fig.~\ref{fig:toy-study}\subref{fig:toy-summary}, SymRegFlow achieves both the lowest FD and the
highest SSIM at every evaluated target pose and base distortion level.

Single-anchor performance improves with smaller rotations and cleaner anchors: $A^1$-only is
strongest near $\alpha=0$, and $A^2$-only near $\alpha=1$. Overall, single-anchor fine-tuning performs
worst due to source-specific warping bias, whereas using both anchors improves performance.
SymRegFlow outperforms Merged anchor at every distortion level, with mean FD advantages of
$0.254$ at $E_b=0.2$ and $0.434$ at $E_b=0.8$. Merged anchor remains competitive for 2D rotations,
which omit the geometric, motion, and temporal misalignment of real-world video warps, but
substantially underperforms SymRegFlow on videos (Sec.~\ref{sec:ablations}).

\subsection{Why SymRegFlow Can Improve Noisy-Anchor Fine-Tuning}
\label{sec:theory}
We analyze SymRegFlow's combination of dual-anchor supervision and consistency
regularization in a scalar Gaussian surrogate at fixed condition $\mathbf c^{\mathrm n}$ and
$t\in(0,1)$, with unit masks and fixed source-view predictions. Let
$x^{\mathrm n}\sim\mathcal N(\mu,q)$, $q>0$, and $A^s=x^{\mathrm n}+E^s$.
Assume $x^{\mathrm n},E^1,E^2,\epsilon$ are mutually independent, with centered
Gaussian errors $E^1,E^2$ of positive variance and $\epsilon\sim\mathcal N(0,1)$
shared across branches.
Restrict the novel-view clean predictor to
$f_\theta(z,t)=\mu+\theta[z-(1-t)\mu]$, with scalar slope $\theta$ and conditioning
suppressed. This family contains the optimal clean Gaussian denoiser. On the clean
reference path $Z_t=(1-t)x^{\mathrm n}+t\epsilon$, define the excess flow-matching risk
by subtracting the minimum clean-reference risk:
\begin{equation}
\mathcal R_t^{\mathrm{exc}}(\theta)
:=\frac{1}{t^2}\mathbb E_{x^{\mathrm n},\epsilon}
\left[\bigl(f_\theta(Z_t,t)-x^{\mathrm n}\bigr)^2\right]
-\frac{q}{(1-t)^2q+t^2}.
\label{eq:prop1-clean-excess}
\end{equation}
\begin{proposition}[Clean-reference improvement with SymRegFlow]
\label{prop:clean-reference-improvement}
Under this surrogate, for nonnegative $\lambda_c$, let $\theta_{\lambda_c}$ minimize the
expected SymRegFlow objective in Eq.~(\ref{eq:symregflow-objective}) at the fixed time $t$, and let
$\theta_1,\theta_2,\theta_{\mathrm M}$ be the corresponding unregularized optima for
$A^1$-only, $A^2$-only, and any fixed convex merge $A^{\mathrm M}$. Then
\begin{equation}
\lambda_c^\star(t)=\frac{1}{q(1-t)^2},\qquad
\mathcal R_t^{\mathrm{exc}}\!\left(\theta_{\lambda_c^\star(t)}\right)=0
<\min_{j\in\{1,2,\mathrm M\}}\mathcal R_t^{\mathrm{exc}}(\theta_j).
\label{eq:prop1-strict-improvement}
\end{equation}
The strict improvement holds throughout a neighborhood of $\lambda_c^\star(t)$.
The excess risk decreases up to $\lambda_c^\star(t)$ and increases thereafter.
\end{proposition}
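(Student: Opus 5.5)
Throughout, fix $t\in(0,1)$ and work in centered coordinates. Put $x:=x^{\mathrm n}-\mu\sim\mathcal N(0,q)$, write $\sigma_s^2:=\operatorname{Var}(E^s)>0$ and $S:=\sigma_1^2+\sigma_2^2$. Because the source-view predictions are fixed, $\mathcal L_{\mathrm{src}}$ does not depend on $\theta$ and can be dropped from the minimization.

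\emph{Reduce everything to quadratics in $\theta$.} Branch $s$ corrupts the pseudo-example, so its novel-view input is $Z_t^{(s)}=(1-t)A^s+t\epsilon$. The centered input is then $Y_s:=Z_t^{(s)}-(1-t)\mu=(1-t)(x+E^s)+t\epsilon$, and the residual against the anchor is $f_\theta-A^s=\theta Y_s-(x+E^s)$. Independence and the shared $\epsilon$ give
\[
\mathbb E\bigl[(\theta Y_s-x-E^s)^2\bigr]=\theta^2\bigl((1-t)^2(q+\sigma_s^2)+t^2\bigr)-2\theta(1-t)(q+\sigma_s^2)+q+\sigma_s^2 .
\]
The shared noise cancels in the consistency term, since $Y_1-Y_2=(1-t)(E^1-E^2)$. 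Hence $\mathbb E[\mathcal L_{\mathrm{cons}}]=\lambda_c\theta^2(1-t)^2S$. The expected objective is therefore $a\theta^2-2b\theta+\mathrm{const}$, with
\[
a=\frac{2(1-t)^2q+(1-t)^2S+2t^2}{t^2}+\lambda_c(1-t)^2S,\qquad b=\frac{(1-t)(2q+S)}{t^2}.
\]
This gives $\theta_{\lambda_c}=b/a$. The same computation on the clean path yields
\[
\mathcal R_t^{\mathrm{exc}}(\theta)=c\,(\theta-\theta^\ast)^2,\qquad c=\frac{(1-t)^2q+t^2}{t^2},\qquad \theta^\ast=\frac{(1-t)q}{(1-t)^2q+t^2}.
\]
The subtracted constant $q/((1-t)^2q+t^2)$ is exactly the minimum value, which confirms that the family contains the optimal clean denoiser.

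\emph{Identify $\lambda_c^\star$ and prove monotonicity.} Setting $b/a=\theta^\ast$ and clearing denominators (multiply through by $t^2$), all terms cancel except $(1-t)St^2$ on one side and $\lambda_c(1-t)^3qSt^2$ on the other. Since $S>0$, this gives $\lambda_c^\star=1/(q(1-t)^2)$ and $\mathcal R^{\mathrm{exc}}_t(\theta_{\lambda_c^\star})=0$. Because $a$ is strictly increasing in $\lambda_c$ while $b>0$ is fixed, $\theta_{\lambda_c}=b/a$ is continuous and strictly decreasing in $\lambda_c$, and it crosses $\theta^\ast$ at $\lambda_c^\star$. Composing with $c(\theta-\theta^\ast)^2$ shows that the excess risk decreases on $[0,\lambda_c^\star]$ and increases afterwards.

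\emph{Handle the baselines.} For any anchor $A=x^{\mathrm n}+E$ with centered error of variance $\sigma^2>0$, the unregularized optimum is $\theta=g(q+\sigma^2)$, where $g(u):=(1-t)u/((1-t)^2u+t^2)$. Since $g'(u)=(1-t)t^2/((1-t)^2u+t^2)^2>0$, $g$ is strictly increasing, so $\theta>g(q)=\theta^\ast$ and the excess risk is strictly positive. This covers $A^1$ and $A^2$ directly. For a convex merge $A^{\mathrm M}=wA^1+(1-w)A^2$, the error $wE^1+(1-w)E^2$ is centered Gaussian with variance $w^2\sigma_1^2+(1-w)^2\sigma_2^2>0$, so the same argument applies. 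Taking the minimum over three strictly positive numbers gives the strict inequality. By continuity of $\lambda_c\mapsto\mathcal R^{\mathrm{exc}}_t(\theta_{\lambda_c})$, which vanishes at $\lambda_c^\star$, the strict improvement persists on a neighborhood of $\lambda_c^\star$.

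The only delicate step is the cancellation that isolates $\lambda_c^\star$. I would carry it out after multiplying by $t^2$, and double-check two points: that $\epsilon$ is shared, so no $t^2$ term enters the consistency penalty, and that the $1/t^2$ weight on $\mathcal L_{\mathrm{anc}}$ is kept in $a$ and $b$.
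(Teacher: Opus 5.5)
Your proposal is correct and follows the paper's proof step for step. It uses the same reduction of the expected objective to a quadratic in $\theta$: your $S$ is the paper's $2\nu$, and your $a,b$ are exactly $2B_{\lambda_c}(t)/t^2$ and $2(1-t)(q+\nu)/t^2$. The remaining steps also match the paper: the shared-noise cancellation $Z_t^{(1)}-Z_t^{(2)}=(1-t)(E^1-E^2)$, the decomposition $\mathcal R_t^{\mathrm{exc}}(\theta)=D(t)(\theta-\theta_{\mathrm{cl}})^2/t^2$, monotonicity via curvature increasing in $\lambda_c$, and continuity for the neighborhood. Solving $b/a=\theta^\ast$ directly and using the increasing map $g$ for the baselines are cosmetic variants of the paper's explicit excess-risk formulas. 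One small slip: the displayed consistency identity should read $\lambda_c\mathbb E[\mathcal L_{\mathrm{cons}}]=\lambda_c\theta^2(1-t)^2S$.
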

Thus, dual-anchor supervision together with suitable consistency regularization
recovers the clean-reference optimum in this surrogate. Appendix~\ref{app:prop1-proof} gives the proof and numerical
illustration; the guarantee is pointwise in $t$.

\subsection{Extension to Latent Flow Matching}
The preceding formulation uses RGB anchors, whereas latent flow-matching models operate in
autoencoder latent space. Let $\mathcal E$ and $\mathcal D$ be the frozen encoder and decoder, with
$X^s=\mathcal E(x^s)$. We first introduce the induced latent-warp operator
$\mathcal W_{s\rightarrow \mathrm n}:=\mathcal E\circ\Pi_{s\rightarrow \mathrm n}\circ\mathcal D$.
In practice, assuming near-perfect autoencoder reconstruction, we directly warp the source RGB
before encoding:
\begin{equation}
A^s:=\mathcal E\!\left(\Pi_{s\rightarrow \mathrm n}(x^s)\right)
\approx\mathcal W_{s\rightarrow \mathrm n}(X^s).
\label{eq:latent-anchor-construction}
\end{equation}
Here, $\Pi_{s\rightarrow \mathrm n}$ is the pixel-space warping operation defined above,
and $A^s$ is the practical latent anchor used for fine-tuning. Downsampling the corresponding visibility mask to latent
resolution yields $M^s$, and the latent anchor satisfies
\begin{equation}
M^s A^s=M^s(X^{\mathrm n}+E^s),
\label{eq:latent-anchor}
\end{equation}
where $E^s$ denotes the latent-space warping error. During fine-tuning, these latent anchors replace
the RGB anchors in the paired pseudo-examples, using the three loss components in
Eq.~(\ref{eq:symregflow-objective}). The video implementation retains the backbone's supervised
velocity losses, equivalent to $t^{-2}$-weighted clean-output losses, while evaluating consistency
on clean predictions (Algorithm~\ref{alg:symregflow-training}).

\section{Experiments}

Our video models are adapted from the Cosmos-Transfer2.5 multi-view flow-matching backbone
\citep{cosmos25}, which we fine-tune on Cosmos-Drive-Dreams \citep{Ren2025a} and nuScenes \citep{nuscenes} using 29-frame
clips at $1280\times720$ resolution per view, following prior work \citep{dist4d}. We report quantitative evaluation on nuScenes. Following the
depth-based warping of GEN3C \citep{Ren2025b}, we unproject source
frames into 3D using estimated depth and render them at target camera poses to obtain
partial anchors and visibility masks.

\subsection{Qualitative Generation on Cosmos-Drive-Dreams}
\label{sec:cosmos-drive-dreams}
Figure~\ref{fig:cosmos-examples} shows SymRegFlow generating videos at continuously specified novel
camera poses while maintaining visual consistency with the existing camera views. Each top-row novel
view is generated in a separate inference run with the same random seed and scene conditioning.
In the bottom two rows, under the same scene conditioning, seeds 0 and 42 produce different
background trees and cloud patterns, with coherent appearance across views and frames. Unlike
camera control by rendering a fixed 3D reconstruction, SymRegFlow supports both viewpoint variation
and stochastic appearance diversity, which benefits multi-camera data generation for autonomous
driving.

\begin{figure}[htbp]
\centering
\includegraphics[width=\linewidth]{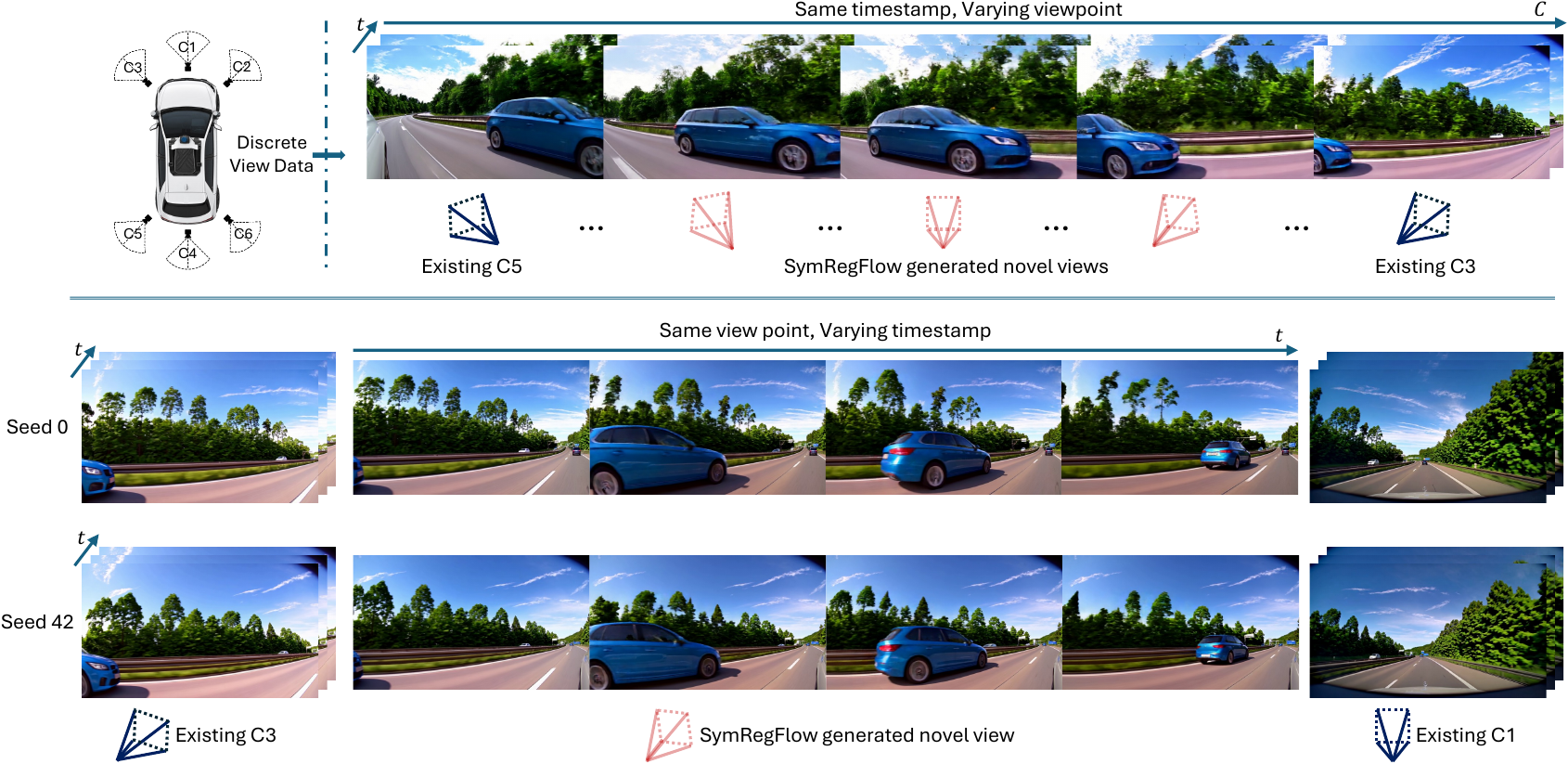}
\caption{Qualitative results on Cosmos-Drive-Dreams. Top: existing and generated novel views at one
timestamp. Bottom: novel-view frames and existing-view sequences for seeds 0 and 42.}
\label{fig:cosmos-examples}
\end{figure}

\FloatBarrier

\subsection{Quantitative Comparison on nuScenes}
\label{sec:nuscenes-comparison}
\paragraph{Evaluation protocol.}
We train and evaluate on the same 30 validation scenes as DiST-4D \citep{dist4d}.
We report novel-view FID \citep{Heusel2017} and FVD \citep{Unterthiner2018} for visual quality,
Fr\'{e}chet Video Motion Distance (FVMD) \citep{fvmd},
VBench subject consistency, background consistency, and temporal flickering \citep{Huang2024}
for temporal quality, VBench-2.0 instance preservation (IP) \citep{Zheng2025}
for entity stability, and MEt3R \citep{met3r} for cross-view consistency.
The detailed protocol is provided in Appendix~\ref{app:nuscenes-protocol}.

\begin{table}[t]
\centering
\caption{Novel-view generation on all 30 nuScenes validation scenes at the interpolated midpoint
($\alpha=0.5$) between two observed camera views.
VBench scores are multiplied by 100.
$^\dagger$MEt3R is computed using generated existing-view videos; other MEt3R scores use captured views.
\textbf{Bold}: best; \underline{underline}: second-best within each column, ranked at full precision.}
\label{tab:quantitative_comparison}
\fontsize{8}{9.5}\selectfont
\setlength{\tabcolsep}{2pt}
\renewcommand{\arraystretch}{1.12}
\begin{tabular*}{\columnwidth}{@{\extracolsep{\fill}}lcccccccc@{}}
\toprule
\textbf{Method}
& FID$\downarrow$
& FVD$\downarrow$
& FVMD$\downarrow$
& \shortstack{VBench\\Subj.$\uparrow$}
& \shortstack{VBench\\Bg.$\uparrow$}
& \shortstack{VBench\\Flicker.$\uparrow$}
& \shortstack{VBench\\IP$\uparrow$}
& MEt3R$\downarrow$ \\
\midrule
DiST-4D \citep{dist4d} & 34.59 & 231.95 & 5381.67 & 88.72 & 91.81 & 95.67 & 92.78 & 0.3362 \\
OmniRe \citep{omnire} & 44.58 & 291.99 & 3104.05 & \textbf{90.20} & \textbf{93.50} & 95.65 & \underline{96.11} & 0.3177 \\
GEN3C \citep{Ren2025b} & 40.28 & 289.24 & 6144.97 & 88.39 & 92.66 & 95.34 & 37.78 & 0.3054 \\
FreeVS \citep{freevs} & \underline{20.60} & 258.53 & 4216.99 & 88.79 & 91.64 & \underline{96.04} & 88.33 & \underline{0.2829} \\
\midrule
\textbf{Ours (\emph{joint-generated})} & 22.00 & \textbf{158.62} & \textbf{2519.17} & 90.13 & 93.32 & \textbf{96.25} & \underline{96.11} & \textbf{0.2818}$^\dagger$ \\
\textbf{Ours (\emph{source-cond.})} & \textbf{19.22} & \underline{159.02} & \underline{2932.87} & \underline{90.17} & \underline{93.32} & 95.82 & \textbf{99.44} & 0.2920 \\
\bottomrule
\end{tabular*}
\end{table}

We evaluate at the midpoint ($\alpha=0.5$), a challenging interpolation pose that maximizes
the distance to the nearest observed view (Sec.~\ref{sec:toy-exp}).
Table~\ref{tab:quantitative_comparison} shows that both inference modes outperform all
baselines in FVD and FVMD, indicating improved video quality and motion consistency.
\emph{Joint-generated} inference achieves the lowest FVD ($158.62$) and FVMD ($2519.17$) and the
highest VBench temporal-flickering score ($96.25$), with subject and background consistency
close to OmniRe. \emph{Source-cond.} inference improves FID from $22.00$ to $19.22$, surpassing
FreeVS ($20.60$), while retaining similar FVD ($159.02$) but yielding higher FVMD ($2932.87$)
and a lower temporal-flickering score ($95.82$).
\emph{Source-cond.} inference also achieves the highest instance-preservation score ($99.44$), which is
critical for autonomous driving.
Because \emph{joint-generated} content can differ from dataset observations under the same scene
conditioning (Fig.~\ref{fig:cosmos-examples}), its MEt3R ($0.2818$) is computed against the
generated existing views and measures internal cross-view agreement, whereas \emph{source-cond.} MEt3R
($0.2920$) measures agreement with captured views.
The latter trails FreeVS ($0.2829$), plausibly because our source conditioning relies on
training-free latent replacement, whereas FreeVS injects reprojected source appearance through
pseudo-images during training and inference \citep{freevs}.

Figure~\ref{fig:nuscenes-novel-view-example} highlights photometric inconsistencies in baseline
outputs: color and brightness differences between nuScenes source cameras appear as left--right
discontinuities in synthesized frames, resembling the fusion seam of merged-anchor fine-tuning
(Sec.~\ref{sec:ablations}).
Several baselines also distort thin
structures, such as street lamps. SymRegFlow produces more uniform appearance across each
frame and better preserves these structures, consistent with its cross-anchor
consistency regularization.
Appendix~\ref{app:nuscenes-qualitative} provides more comparisons.

\begin{figure}[!htbp]
\centering
\includegraphics[width=0.8\textwidth]{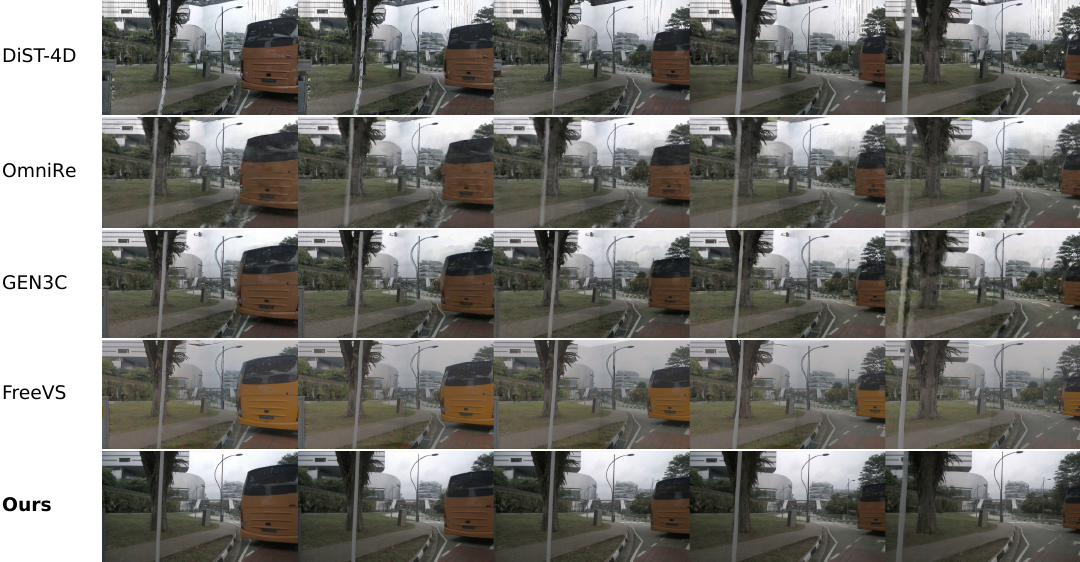}
\caption{Qualitative comparison of novel-view synthesis on nuScenes. The last row shows
\emph{source-cond.} SymRegFlow results for a fair comparison with the baselines.}
\label{fig:nuscenes-novel-view-example}
\end{figure}

\FloatBarrier

\subsection{Ablation Studies}
\label{sec:ablations}
Table~\ref{tab:ablation-quality} ablates anchor strategy, consistency weight, and DCVA for the
midpoint novel pose. Each variant uses one LoRA adapter trained jointly on the same four nuScenes
validation scenes and evaluated at step $2{,}048$; Appendix~\ref{app:experimental-details} gives
details and further analyses.

\paragraph{Model configurations.}
We compare three anchor baselines: $A_L$ and $A_R$ use only the left or right warped anchor,
respectively, while $A_M$
uses a mask-weighted pixel-space merge of both RGB anchors. All three use DCVA for fair comparison.
The SymRegFlow dual-anchor sweep $C_\gamma$ uses DCVA and $\lambda_c=\gamma$ for
$\gamma\in\{0,0.01,0.05,0.1,0.5,1,2\}$. Across these weights, $C_{0.05}$ performs best, with the
lowest existing- and novel-view FVD and MEt3R and the highest VBench subject and background
consistency; we therefore select the optimal $\lambda_c^\star=0.05$.
To investigate the influence of DCVA, $C_{0.05}^{\mathrm{FA}}$ replaces directed attention with full attention at the same weight.

\begin{table}[t]
\centering
\caption{Ablation study on four nuScenes scenes. $\lambda_c$ is the consistency regularization weight
(--: not applicable).
The superscript `FA' in $C_{0.05}^{\mathrm{FA}}$ denotes full attention without DCVA.
VBench scores are multiplied by 100.
\textbf{Bold}: best; \underline{underline}: second-best.}
\label{tab:ablation-quality}
\fontsize{8}{9.5}\selectfont
\setlength{\tabcolsep}{2pt}
\renewcommand{\arraystretch}{1.12}
\begin{tabular*}{\columnwidth}{@{\extracolsep{\fill}}lccccc|ccccc@{}}
\toprule
\multirow{2}{*}{\textbf{Model}}
& \multicolumn{3}{c}{\textbf{Anchors}}
& \multirow{2}{*}{\textbf{DCVA}}
& \multirow{2}{*}{$\lambda_c$}
& \multicolumn{1}{c}{\textbf{Existing view}}
& \multicolumn{4}{c}{\textbf{Novel view}} \\
\cmidrule(lr){2-4}\cmidrule(lr){7-7}\cmidrule(l){8-11}
& Left & Right & Merge & & & FVD$\downarrow$
& FVD$\downarrow$
& VBench Subj.$\uparrow$ & VBench Bg.$\uparrow$
& MEt3R$\downarrow$ \\
\midrule
\multicolumn{11}{@{}l}{\textit{Anchor baselines}} \\
\addlinespace[2pt]
$A_L$ & $\checkmark$ &  &  & $\checkmark$ & -- & 238.14 & 562.23 & 87.86 & 90.36 & 0.2955 \\
$A_R$ &  & $\checkmark$ &  & $\checkmark$ & -- & \textbf{198.86} & 605.63 & 88.79 & 92.20 & 0.3512 \\
$A_M$ &  &  & $\checkmark$ & $\checkmark$ & -- & 213.97 & 704.32 & 87.71 & 91.78 & 0.2787 \\
\midrule
\multicolumn{11}{@{}l}{\textit{SymRegFlow variants (ours)}} \\
\addlinespace[2pt]
$C_0$ & $\checkmark$ & $\checkmark$ &  & $\checkmark$ & 0 & 210.19 & \underline{545.14} & \underline{89.64} & 92.56 & 0.3380 \\
$C_{0.01}$ & $\checkmark$ & $\checkmark$ &  & $\checkmark$ & 0.01 & 233.29 & 558.56 & 89.51 & 91.53 & 0.2832 \\
$C_{0.05}$ & $\checkmark$ & $\checkmark$ &  & $\checkmark$ & 0.05 & \underline{209.37} & \textbf{498.94} & \textbf{89.73} & \textbf{92.75} & \underline{0.2660} \\
$C_{0.1}$ & $\checkmark$ & $\checkmark$ &  & $\checkmark$ & 0.1 & 223.80 & 574.24 & 88.77 & 92.57 & 0.2935 \\
$C_{0.5}$ & $\checkmark$ & $\checkmark$ &  & $\checkmark$ & 0.5 & 212.84 & 575.07 & 88.50 & 92.31 & 0.3032 \\
$C_1$ & $\checkmark$ & $\checkmark$ &  & $\checkmark$ & 1 & 237.80 & 624.15 & 89.09 & 92.32 & 0.3293 \\
$C_2$ & $\checkmark$ & $\checkmark$ &  & $\checkmark$ & 2 & 271.87 & 701.95 & 88.82 & 91.88 & 0.2969 \\
$C_{0.05}^{\mathrm{FA}}$ & $\checkmark$ & $\checkmark$ &  &  & 0.05 & 261.40 & 586.53 & 88.93 & \underline{92.62} & \textbf{0.1996} \\
\bottomrule
\end{tabular*}
\end{table}

\begingroup
\raggedbottom
\paragraph{Effectiveness of dual-anchor fine-tuning.}
Dual-anchor fine-tuning improves novel-view generation even without consistency regularization:
$C_0$ outperforms all three anchor baselines in novel-view FVD and both VBench consistency scores.
For example, $C_0$ lowers novel-view FVD from $562.23$ for the strongest single-anchor baseline
$A_L$ to $545.14$, while $C_{0.05}$ lowers it from $704.32$ for merged-anchor $A_M$ to $498.94$.
Although $A_M$ uses the same two anchors as SymRegFlow for training, it inherits artifacts from
mismatched RGB merging, including misaligned overlaps and residual gray pixels where inpainting
fails (Fig.~\ref{fig:anchor-local-copy}).
$A_R$ nevertheless achieves the best existing-view FVD, plausibly because its single-anchor
objective favors reconstructing existing views without competing consistency regularization; this
advantage does not translate into stronger novel-view generation.

To better expose the biases in $A_L$ and $A_R$, we conduct a controlled one-scene experiment.
Both models tend to copy anchor-visible content when inpainting missing regions, duplicating
construction workers and background buildings and violating camera-pose symmetry. In
Fig.~\ref{fig:anchor-local-copy}, queries on duplicated construction
workers (red) attend most strongly to their source-supported counterparts (cyan), reaching
$13.82\times$ and $12.22\times$ uniform attention, respectively.
Cross-view and within-frame analyses further show that SymRegFlow reduces source-dependent
attention biases in $A_L$ and $A_R$ (Appendix~\ref{app:anchor-attention}).
$A_M$ avoids duplication but retains a conspicuous fusion seam and geometric discontinuity.
SymRegFlow avoids both artifacts.

\begin{figure}[!htbp]
\centering
\includegraphics[width=\textwidth]{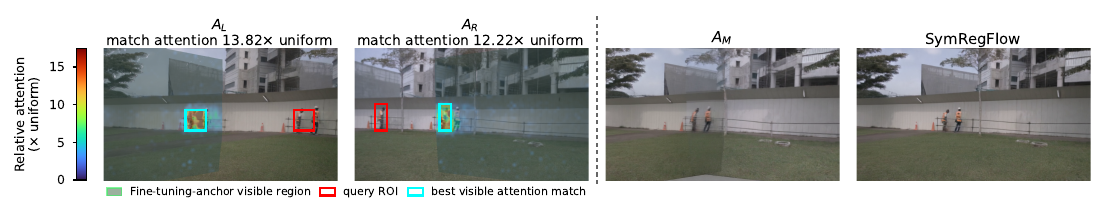}
\caption{Examples of attention bias in anchor baselines. $A_L$ and $A_R$ copy workers
from source-supported regions (cyan boxes) into missing regions (red boxes); green shading indicates
anchor-visible regions. The colorbar shows attention strength as a multiple of uniform attention.}
\label{fig:anchor-local-copy}
\end{figure}

\paragraph{Effectiveness of $\lambda_c$.}
Increasing $\lambda_c$ from $0$ to $0.01$ to $0.05$ reduces MEt3R from $0.3380$ to $0.2832$
to $0.2660$. Novel-view FVD improves from $545.14$ to $498.94$, despite a slight deterioration
at $0.01$; $\lambda_c=0.05$ also maximizes both VBench consistency scores. Larger weights
worsen FVD, from $574.24$ at $0.1$ to $701.95$ at $2$ (Table~\ref{tab:ablation-quality}).
This optimum qualitatively matches Proposition~\ref{prop:clean-reference-improvement},
whose excess risk decreases to $\lambda_c^\star(t)$ and increases thereafter.
Appendix~\ref{app:video-ablation-plots} provides radar profiles.

\paragraph{Effectiveness of DCVA.}
Replacing DCVA with full attention increases existing-view FVD by $24.85\%$ and worsens
novel-view FVD, but lowers MEt3R by $24.96\%$ with similar VBench scores. It also requires
$1.47\times$ the backbone forward/backward time (Appendix~\ref{app:video-ablation-plots}). DCVA thus
trades some cross-view agreement for efficiency and view fidelity.

\FloatBarrier
\endgroup

\section{Related Work}

\subsection{Symmetry and Equivariance in World Models}
Generative models commonly preserve symmetric target distributions through equivariant score or
velocity fields \citep{kohler2020equivariantflows}. Equivariant diffusion uses an
$\mathrm{E}(3)$-equivariant denoiser, while equivariant flow-matching methods constrain continuous
normalizing-flow vector fields or probability paths
\citep{hoogeboom2022edm,klein2023equifm,song2023equifm}. Flow Equivariant World Models instead
build motion-aligned equivariance into a structured latent memory \citep{flowequivwm}. More
generally, these methods require symmetry-specific denoisers, vector fields, or latent-memory
architectures. Architecture-agnostic alternatives avoid such redesign but rely, respectively, on
group-averaged training targets or symmetry-structured noise during training and sampling
\citep{tong2025targetsymm,equivdm}. SymRegFlow introduces a symmetry inductive bias through
its fine-tuning objective, aligning denoising outputs induced by independently warped, noisy anchors
over the union of valid anchor supports while retaining the underlying flow-matching formulation.

\subsection{Camera-Controlled Video World Models}
\paragraph{Training-free camera control.}
Training-free methods keep a pretrained video generator frozen and enforce a target trajectory at
inference through latent replacement, sampling-time guidance, or test-time
optimization \citep{trainingfreecam,zero4d,latentreframe,hcontrol}. They avoid paired training data
but typically require modified sampling or trajectory-specific refinement. SymRegFlow instead learns
camera control by fine-tuning and retains the standard sampler for arbitrary target poses;
\emph{source-cond.} inference additionally uses source-latent replacement.

\paragraph{Training-based camera control.}
MotionCtrl, CameraCtrl, 4DiM, and ReCamMaster amortize camera-controlled generation using costly
pose-annotated or synchronized pose--video pairs, generally producing one target stream
\citep{motionctrl,cameractrl,fourdim,recammaster}. ReCapture \citep{recapture} fits video-specific
LoRAs to valid pixels of a noisy warped anchor for each requested trajectory, masking missing
regions and relying on the pretrained prior to repair artifacts. It neither couples source-camera
anchors nor enforces cross-view symmetry. SymRegFlow instead uses dataset-level fine-tuning and
agreement among independently warped anchors to reduce view-specific errors.

\paragraph{Joint-view world models.}
Joint-view generators couple synchronized spatio-temporal camera tokens through self- or cross-view
attention, conditioned on rig identities, maps, boxes, text, or actions. Examples include MagicDrive,
Drive-WM, and DriveDreamer-2 for driving \citep{magicdrive,drivewm,drivedreamer2}, and
Cosmos-Predict/Transfer2.5 for driving and multi-camera robotics \citep{cosmos25}. Their cross-camera
consistency remains tied to training-rig slots and poses. SymRegFlow introduces a continuously
pose-conditioned target slot while preserving the pretrained multi-view prior, without novel-view
RGB supervision.

\section{Conclusion and Discussion}
SymRegFlow extends pretrained multi-view world models to cross-view-consistent video generation
across continuous viewpoints without novel-view RGB supervision, combining masked dual-anchor
supervision with cross-anchor denoising consistency to mitigate source-dependent biases. Under an
affine Gaussian surrogate, suitable regularization recovers the clean-reference optimum at each
noise level. On nuScenes, SymRegFlow achieves the lowest FVD and FVMD among evaluated baselines,
with source-conditioned inference also attaining the best FID and instance preservation; ablations
confirm improved novel-view quality and reduced source-dependent attention bias.

\paragraph{Limitations.}
SymRegFlow depends on anchor quality: depth, projection, and visibility errors can corrupt
supervision, especially near occlusions or under large viewpoint changes. Dual-anchor consistency
cannot correct shared systematic errors, and jointly unseen regions lack direct supervision. Better
anchors and geometric-uncertainty modeling remain open directions.

\bibliography{references}
\bibliographystyle{conference}

\clearpage
\appendix
\section{Training and Inference Algorithms}
\label{app:algorithms}

Algorithms~\ref{alg:symregflow-training}--\ref{alg:symregflow-generate} use the notation of
Sec.~\ref{sec:equiv} in latent space, with source indices $\mathcal S=\{1,2\}$ and slot order
$(1,\mathrm n,2)$. The shared condition $\mathbf c$ includes text, source-camera identities,
and world-scenario controls encoded in the same slot order. We abbreviate
$\widehat{\mathbf Z}_{0,\theta}^{(s)}$ as $\widehat{\mathbf Z}^{(s)}$.
Losses use elementwise means; for a mask broadcast over latent channels,
\begin{equation}
\|R\|_M^2:=\frac{\sum_i m_i R_i^2}{\max(\sum_i m_i,1)},
\qquad M=\operatorname{diag}(m).
\label{eq:algorithm-masked-loss}
\end{equation}
Thus, an empty mask contributes zero loss.

\begin{algorithm}[H]
\caption{SymRegFlow training}
\label{alg:symregflow-training}
\small
\begin{algorithmic}[1]
\REQUIRE Prepared clips with sources $x^1,x^2$, anchors and masks
$(A^s,m^s)=\Pi_{s\rightarrow\mathrm n}(x^s)$, condition $\mathbf c$, and novel camera
$(\mathbf T^{\mathrm n},K^{\mathrm n})$; pretrained $\theta$; initialized $e_\phi$;
frozen autoencoder $(\mathcal E,\mathcal D)$; weight $\lambda_c$;
training noise law $p_{\mathrm{train}}$ and weighting $\omega(t)$ (see below).
\STATE Freeze base weights; train LoRA and $e_\phi$; enable DCVA in both passes.
\FOR{each optimization step}
    \STATE Sample a prepared clip and its novel camera, anchors, masks, and condition.
    \STATE $X^s\gets\mathcal E(x^s),\quad A^s\gets\mathcal E(A^s),\qquad s\in\mathcal S$
    \STATE $m^s\gets\mathbf 1[\operatorname{Resize}_{\mathrm{latent}}(m^s)>0.5],\quad
        M^s\gets\operatorname{diag}(m^s),\qquad s\in\mathcal S$
    \STATE $M^{\cup}\gets\operatorname{diag}(\max(m^1,m^2))$
    \STATE $\mathbf c^{\mathrm n}\gets(\mathbf c,e_\phi(\mathbf T^{\mathrm n},K^{\mathrm n}))$
    \STATE Sample $t\sim p_{\mathrm{train}}$ and
        $\boldsymbol\epsilon=(\epsilon^1,\epsilon^{\mathrm n},\epsilon^2)\sim\mathcal N(0,I)$ once.
    \FOR{$s\in\mathcal S$}
        \STATE $\mathbf Z_t^{(s)}\gets(1-t)(X^1,A^s,X^2)+t\boldsymbol\epsilon$
        \STATE $\widehat{\mathbf Z}^{(s)}\gets
            f_\theta(\mathbf Z_t^{(s)},\mathbf c^{\mathrm n},t)$
            \hfill\COMMENT{shared noise, time, and condition}
    \ENDFOR
    \STATE $\displaystyle\mathcal L_{\mathrm{src}}\gets
        \frac12\sum_{s\in\mathcal S}\sum_{r\in\mathcal S}
        \big\|\widehat Z^{r,(s)}-X^r\big\|_2^2$
    \STATE $\displaystyle\mathcal L_{\mathrm{anc}}\gets
        \sum_{s\in\mathcal S}\big\|\widehat Z^{\mathrm n,(s)}-A^s\big\|_{M^s}^2$
    \STATE $\displaystyle\mathcal L_{\mathrm{cons}}\gets
        \big\|\widehat Z^{\mathrm n,(1)}-\widehat Z^{\mathrm n,(2)}\big\|_{M^{\cup}}^2$
    \STATE $\mathcal L\gets\omega(t)(\mathcal L_{\mathrm{src}}+\mathcal L_{\mathrm{anc}})
        +\lambda_c\mathcal L_{\mathrm{cons}}$
    \STATE Backpropagate $\mathcal L$ once through both passes; update only
        $(\theta_{\mathrm{LoRA}},\phi)$.
\ENDFOR
\RETURN $(\theta_{\mathrm{LoRA}},\phi)$
\end{algorithmic}
\end{algorithm}

\paragraph{Prediction parameterization.}
Setting $\omega(t)=t^{-2}$ gives Eq.~(\ref{eq:symregflow-objective}).
The video implementation retains the pretrained velocity parameterization
$\mathbf u_\theta$, using
$f_\theta(\mathbf Z_t,\mathbf c^{\mathrm n},t):=\mathbf Z_t-t\mathbf u_\theta(\mathbf Z_t,\mathbf c^{\mathrm n},t)$
for its clean prediction. Its supervised velocity losses are equivalent to
$\omega(t)=t^{-2}$, since
$\|\mathbf u_\theta-(\boldsymbol\epsilon-\mathbf X)\|^2
=t^{-2}\|\widehat{\mathbf Z}_{0,\theta}-\mathbf X\|^2$.
Consistency is computed on clean predictions without this factor.
Here $p_{\mathrm{train}}$ is the training scheduler's distribution of normalized noise
levels $t\in(0,1)$; mask resizing is trilinear. Both passes reuse the same conditioning dropout.

\paragraph{DCVA execution.}
Source queries attend only to source keys; novel queries attend to all three views.
The four-stream implementation used in the DCVA experiments shares source computation and
isolates the two novel streams, giving the same two-pass objective and gradients.

\clearpage
\paragraph{Sampling conventions.}
Both modes use fine-tuned SymRegFlow models with DCVA, without anchors or geometric warping.
Let $t_0>\cdots>t_N=0$ be the scheduler's decreasing noise levels.
$\mathbf u_\theta^{\mathrm{CFG}}$ denotes classifier-free guided velocity; the backbone's
guidance convention is $\mathbf u_{\mathrm{cond}}+w(\mathbf u_{\mathrm{cond}}-\mathbf u_{\mathrm{uncond}})$.
$\operatorname{SchedulerStep}$ advances the joint latent state and its solver history using the
pretrained numerical solver (UniPC in the video implementation).

\begin{algorithm}[H]
\caption{\emph{Source-cond.} inference: novel-view synthesis conditioned on observed sources}
\label{alg:symregflow-nvs}
\small
\begin{algorithmic}[1]
\REQUIRE Fine-tuned $(\theta_{\mathrm{LoRA}},\phi)$; source videos $x^1,x^2$;
condition $\mathbf c$; novel camera $(\mathbf T^{\mathrm n},K^{\mathrm n})$;
noise schedule $\{t_k\}_{k=0}^{N}$; guidance $w$;
replacement horizon $N_{\mathrm{rep}}\in\{0,\ldots,N\}$.
\STATE $X^s\gets\mathcal E(x^s),\quad s\in\mathcal S;\qquad
    \mathbf c^{\mathrm n}\gets(\mathbf c,e_\phi(\mathbf T^{\mathrm n},K^{\mathrm n}))$
\STATE Sample $\boldsymbol\epsilon=(\epsilon^1,\epsilon^{\mathrm n},\epsilon^2)
    \sim\mathcal N(0,I)$; initialize $\mathbf Z_{t_0}\gets\boldsymbol\epsilon$ and the scheduler.
\FOR{$k=0,\ldots,N-1$}
    \STATE $\mathbf Z_{\mathrm{in}}\gets\mathbf Z_{t_k}$
    \IF{$k<N_{\mathrm{rep}}$}
        \STATE $\widetilde Z^s_{t_k}\gets(1-t_k)X^s+t_k\epsilon^s,\qquad s\in\mathcal S$
        \STATE $\mathbf Z_{\mathrm{in}}\gets
            (\widetilde Z^1_{t_k},Z^{\mathrm n}_{t_k},\widetilde Z^2_{t_k})$
            \hfill\COMMENT{replace source inputs only}
    \ENDIF
    \STATE $\widehat{\mathbf u}\gets
        \mathbf u_\theta^{\mathrm{CFG}}(\mathbf Z_{\mathrm{in}},\mathbf c^{\mathrm n},t_k;w)$
    \STATE $\mathbf Z_{t_{k+1}}\gets\operatorname{SchedulerStep}
        (\mathbf Z_{t_k},\widehat{\mathbf u},t_k\rightarrow t_{k+1})$
\ENDFOR
\RETURN $\mathcal D(Z^{\mathrm n}_{t_N})$
\end{algorithmic}
\end{algorithm}

The source noise $\epsilon^s$ is fixed throughout sampling. The horizon $N_{\mathrm{rep}}$
counts initial replacement steps: $N_{\mathrm{rep}}=0$ disables replacement, while
$N_{\mathrm{rep}}=N$ replaces source inputs at every step. Substitution changes only the denoiser
input; the scheduler advances its maintained state $\mathbf Z_{t_k}$, which supplies all slots
once $k\geq N_{\mathrm{rep}}$. The novel slot follows a single sampling trajectory throughout.
The \emph{source-cond.} results in Table~\ref{tab:quantitative_comparison} use $N_{\mathrm{rep}}=30$
and $N=35$ (Sec.~\ref{app:ablation-training}).

\begin{algorithm}[H]
\caption{\emph{Joint-generated} inference: joint three-view generation from noise}
\label{alg:symregflow-generate}
\small
\begin{algorithmic}[1]
\REQUIRE Fine-tuned $(\theta_{\mathrm{LoRA}},\phi)$; condition $\mathbf c$;
novel camera $(\mathbf T^{\mathrm n},K^{\mathrm n})$;
noise schedule $\{t_k\}_{k=0}^{N}$; guidance $w$.
\STATE $\mathbf c^{\mathrm n}\gets(\mathbf c,e_\phi(\mathbf T^{\mathrm n},K^{\mathrm n}))$
\STATE Initialize $\mathbf Z_{t_0}=(Z^1_{t_0},Z^{\mathrm n}_{t_0},Z^2_{t_0})
    \sim\mathcal N(0,I)$ and the scheduler.
\FOR{$k=0,\ldots,N-1$}
    \STATE $\widehat{\mathbf u}\gets
        \mathbf u_\theta^{\mathrm{CFG}}(\mathbf Z_{t_k},\mathbf c^{\mathrm n},t_k;w)$
    \STATE $\mathbf Z_{t_{k+1}}\gets\operatorname{SchedulerStep}
        (\mathbf Z_{t_k},\widehat{\mathbf u},t_k\rightarrow t_{k+1})$
\ENDFOR
\RETURN $(\mathcal D(Z^1_{t_N}),\mathcal D(Z^{\mathrm n}_{t_N}),\mathcal D(Z^2_{t_N}))$
\end{algorithmic}
\end{algorithm}

\emph{Joint-generated} inference uses no observed RGB inputs or source-slot substitution.
Table~\ref{tab:quantitative_comparison} reports both inference modes.

\clearpage

\clearpage
\section{Proof and Numerical Illustration of Proposition~\ref{prop:clean-reference-improvement}}
\label{app:prop1-proof}

We analyze the scalar, fully visible reduction stated in
Proposition~\ref{prop:clean-reference-improvement}, conditional on a fixed
$\mathbf c^{\mathrm n}$ and $t\in(0,1)$. The source-view predictions are held fixed,
so their loss is constant with respect to the scalar novel-view parameter.
With unit masks, the union and intersection coincide; this analysis does not compare the two
masking choices.
All expectations below are over the clean sample, warping errors, and diffusion
noise, or their relevant marginals. Write
\begin{equation}
(\sigma^s)^2=\operatorname{Var}(E^s)>0,\qquad
\nu=\frac{(\sigma^1)^2+(\sigma^2)^2}{2},\qquad
D(t)=(1-t)^2q+t^2=\operatorname{Var}(Z_t).
\label{eq:prop1-variances}
\end{equation}
The two training inputs are $Z_t^{(s)}=(1-t)A^s+t\epsilon$, with the same
$\epsilon$ in both branches. Removing the constant source term from
Eq.~(\ref{eq:symregflow-objective}) gives
\begin{equation}
\mathcal J_{\lambda_c,t}(\theta)
=\frac{1}{t^2}\sum_{s=1}^2
\mathbb E\!\left[\bigl(f_\theta(Z_t^{(s)},t)-A^s\bigr)^2\right]
+\lambda_c\mathbb E\!\left[
\bigl(f_\theta(Z_t^{(1)},t)-f_\theta(Z_t^{(2)},t)\bigr)^2\right].
\label{eq:prop1-training-objective}
\end{equation}
This retains both the sum of the two anchor losses and the $t^{-2}$ weighting
of the paper's objective; consistency is in clean-prediction space.

\begin{proof}[Proof of Proposition~\ref{prop:clean-reference-improvement}]
First, the chosen affine intercept entails no restriction within the affine
class. For a general predictor $\theta z+b$, each supervised residual has mean
$b-[1-(1-t)\theta]\mu$. The intercept cancels from consistency, so minimizing
over $b$ gives $b=[1-(1-t)\theta]\mu$ and hence
$f_\theta(z,t)=\mu+\theta[z-(1-t)\mu]$.

For the clean reference path, joint Gaussianity gives the optimal clean
Gaussian denoiser and its conditional variance:
\begin{equation}
\mathbb E[x^{\mathrm n}\mid Z_t=z]
=\mu+\theta_{\mathrm{cl}}[z-(1-t)\mu],\qquad
\theta_{\mathrm{cl}}=\frac{(1-t)q}{D(t)},\qquad
\operatorname{Var}(x^{\mathrm n}\mid Z_t)=\frac{qt^2}{D(t)}.
\label{eq:prop1-clean-optimum}
\end{equation}
Moreover, the target velocity is $\epsilon-x^{\mathrm n}$, and
$(Z_t-f_\theta(Z_t,t))/t-(\epsilon-x^{\mathrm n})
=(x^{\mathrm n}-f_\theta(Z_t,t))/t$.
Thus the weighted clean-prediction error is exactly the velocity-matching
risk. Completing its square yields
\begin{equation}
\frac{1}{t^2}\mathbb E_{x^{\mathrm n},\epsilon}
\!\left[\bigl(f_\theta(Z_t,t)-x^{\mathrm n}\bigr)^2\right]
=\frac{D(t)\theta^2-2(1-t)q\theta+q}{t^2}
=\frac{q}{D(t)}+\frac{D(t)}{t^2}(\theta-\theta_{\mathrm{cl}})^2.
\label{eq:prop1-clean-decomposition}
\end{equation}
In particular, $\mathcal R_t^{\mathrm{exc}}(\theta)
=D(t)(\theta-\theta_{\mathrm{cl}})^2/t^2$.

For the training objective, independence and centering of the noises give
\begin{align}
\sum_{s=1}^2\mathbb E\!\left[
\bigl(f_\theta(Z_t^{(s)},t)-A^s\bigr)^2\right]
&=2\left[\bigl((1-t)^2(q+\nu)+t^2\bigr)\theta^2
-2(1-t)(q+\nu)\theta+q+\nu\right],\nonumber\\
\mathbb E\!\left[
\bigl(f_\theta(Z_t^{(1)},t)-f_\theta(Z_t^{(2)},t)\bigr)^2\right]
&=2(1-t)^2\nu\theta^2.
\label{eq:prop1-training-moments}
\end{align}
The second equality uses $Z_t^{(1)}-Z_t^{(2)}=(1-t)(E^1-E^2)$: the shared
clean sample and diffusion noise cancel. Consequently, defining
\begin{equation}
B_{\lambda_c}(t)=(1-t)^2(q+\nu)+t^2+\lambda_c(1-t)^2\nu t^2,
\label{eq:prop1-training-curvature}
\end{equation}
we obtain
\begin{equation}
\mathcal J_{\lambda_c,t}(\theta)
=\frac{2}{t^2}\left[B_{\lambda_c}(t)\theta^2
-2(1-t)(q+\nu)\theta+q+\nu\right],\qquad
\theta_{\lambda_c}=\frac{(1-t)(q+\nu)}{B_{\lambda_c}(t)}.
\label{eq:prop1-regularized-optimum}
\end{equation}
The minimizer is unique since the curvature is $4B_{\lambda_c}(t)/t^2>0$
for $\lambda_c\geq0$. Substituting it into the clean excess risk gives
\begin{equation}
\mathcal R_t^{\mathrm{exc}}(\theta_{\lambda_c})
=\frac{(1-t)^2\nu^2t^2\bigl[1-\lambda_c q(1-t)^2\bigr]^2}
{D(t)B_{\lambda_c}(t)^2}.
\label{eq:prop1-regularized-risk}
\end{equation}
At $\lambda_c=\lambda_c^\star(t)=1/[q(1-t)^2]$,
$B_{\lambda_c^\star}(t)=(q+\nu)D(t)/q$, so
$\theta_{\lambda_c^\star(t)}=\theta_{\mathrm{cl}}$ and the excess risk is zero.
Since $B_{\lambda_c}(t)$ strictly increases with $\lambda_c$, the optimizer
$\theta_{\lambda_c}$ strictly decreases through $\theta_{\mathrm{cl}}$ at
$\lambda_c^\star(t)$. Equation~(\ref{eq:prop1-clean-decomposition}) therefore
implies that excess risk strictly decreases before this strength and strictly
increases after it.

For comparison, let $A^{\mathrm M}=\pi A^1+(1-\pi)A^2$ for fixed
$\pi\in[0,1]$, and denote the three anchor-error variances by
\begin{equation}
r_1=(\sigma^1)^2,\qquad r_2=(\sigma^2)^2,\qquad
r_{\mathrm M}=\pi^2(\sigma^1)^2+(1-\pi)^2(\sigma^2)^2.
\label{eq:prop1-baseline-variances}
\end{equation}
Each $r_j$ is positive. Minimizing the single-target weighted supervision
for each $j\in\{1,2,\mathrm M\}$ yields
\begin{equation}
\theta_j=\frac{(1-t)(q+r_j)}{(1-t)^2(q+r_j)+t^2},\qquad
\mathcal R_t^{\mathrm{exc}}(\theta_j)
=\frac{(1-t)^2r_j^2t^2}
{D(t)\bigl[(1-t)^2(q+r_j)+t^2\bigr]^2}>0.
\label{eq:prop1-baseline-risk}
\end{equation}
This proves Eq.~(\ref{eq:prop1-strict-improvement}). Finally,
$\mathcal R_t^{\mathrm{exc}}(\theta_{\lambda_c})$ is continuous in
$\lambda_c\geq0$, and the minimum of the three baseline risks is strictly
positive. The strict comparison therefore persists on an open interval
containing $\lambda_c^\star(t)$.
\end{proof}

\paragraph{Interpretation and scope.}
SymRegFlow combines supervision from both noisy anchors with shared-noise
consistency. Supervision alone fits the inflated anchor variance $q+\nu$;
consistency penalizes sensitivity to anchor disagreement and, at the calibrated
strength, recovers the clean-reference optimum. Stronger regularization further
shrinks the denoiser's slope below the clean optimum, increasing excess risk. Zero excess
risk means attaining the minimum total risk $q/D(t)$, not zero reconstruction
error. The comparison concerns clean-reference evaluation, not the numerical
values of different training objectives. It holds for the stated affine,
independent-error surrogate with unit masks and fixed source predictions;
it does not assert the same guarantee for the full neural network or one
constant $\lambda_c$ across all diffusion times.

\paragraph{Numerical illustration.}
To instantiate the viewpoint-dependent error scale from Sec.~\ref{sec:toy-exp},
set $\sigma^s=\kappa(E_b+d^s)$, with $d^1=\alpha$, $d^2=1-\alpha$.
Figure~\ref{fig:prop1-clean-risk} evaluates
Eqs.~(\ref{eq:prop1-regularized-risk}) and~(\ref{eq:prop1-baseline-risk}) using
$\mu=1.2$, $q=1$, $t=0.5$, $E_b=0.4$, $\kappa=0.6$, and
merge weight $\pi=1-\alpha$. At $\alpha=0.5$ the single-anchor risks coincide;
replacing $\alpha$ by $1-\alpha$ exchanges them and preserves both the merged
and SymRegFlow curves. In all three cases $\lambda_c^\star=4$, since the
calibration depends on $q,t$ rather than the anchor-error variances.

\begin{figure}[htbp]
\centering
\includegraphics[width=\textwidth]{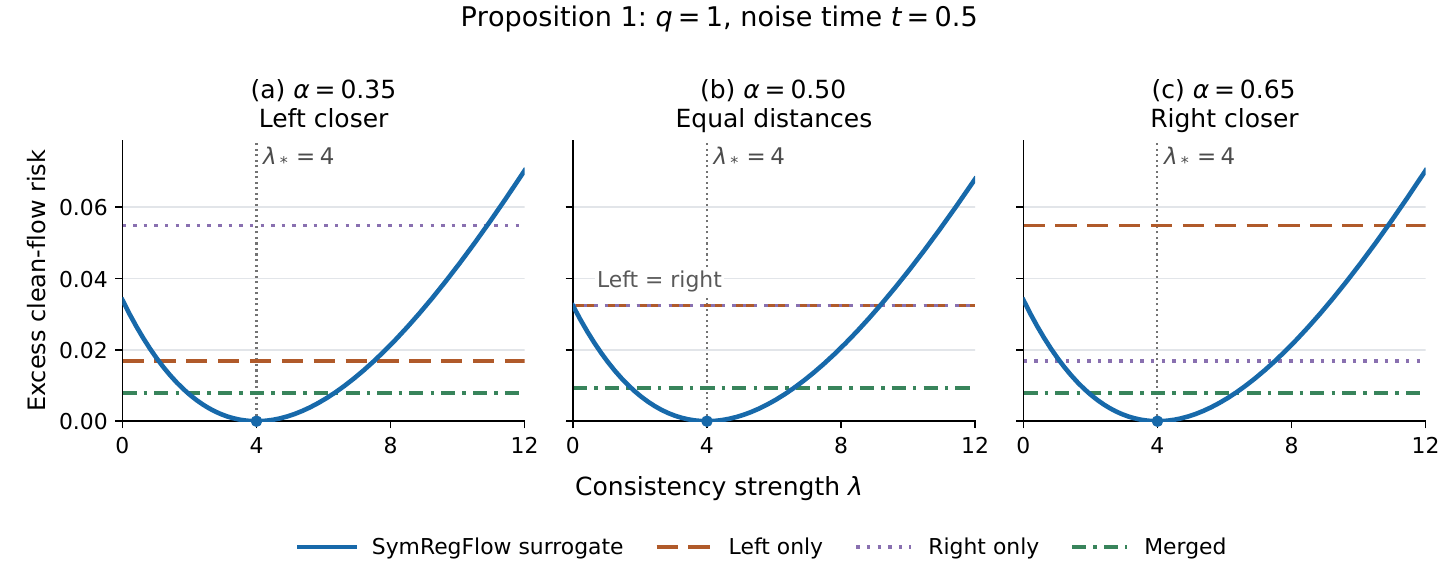}
\caption{Analytic clean-reference excess risk for Proposition~\ref{prop:clean-reference-improvement}
at $\alpha\in\{0.35,0.50,0.65\}$. The horizontal baselines use the optimal
unregularized predictors; the SymRegFlow curve uses the optimal predictor
for each consistency strength ($\lambda$ in the plot denotes $\lambda_c$).
The midpoint makes left/right risks equal, and mirrored viewpoints exchange
them. SymRegFlow has zero excess risk at $\lambda_c^\star=4$ and strictly
improves on all three baselines in a neighborhood of that strength.
These are evaluations of the surrogate formulas, not additional MNIST measurements.}
\label{fig:prop1-clean-risk}
\end{figure}
\FloatBarrier

\section{Implementation Details and Additional Ablation Results}
\label{app:experimental-details}
\label{app:detailed-ablation}

We first describe the shared implementation and study-specific evaluation protocols for the
main video experiments and ablations, then present additional quantitative and attention-bias analyses.

\subsection{Implementation Details}
\label{app:ablation-training}

\paragraph{Backbone and adaptation.}
All video experiments use Cosmos-Transfer2.5 \citep{cosmos25} with 29-frame clips at
$1280\times720$ resolution per view. Fine-tuning uses rank-$96$ LoRA with LoRA alpha $12$,
gray fill for invalid warp regions, and the continuous-view conditioning described below. Dual-anchor variants regularize clean predictions;
the default SymRegFlow configuration uses DCVA and $\lambda_c=0.05$.
For the quantitative nuScenes experiments and ablations, training uses 16 GPUs and a global
batch size of 32. LoRA and view-embedding learning rates are $5\times10^{-4}$ and
$5\times10^{-3}$, respectively, with cosine decay, 25 warm-up steps, and weight decay $0.01$.
Reported quantitative checkpoints are evaluated at step $2{,}048$.

\paragraph{Novel-view pose conditioning.}
Cosmos-Transfer2.5 \citep{cosmos25} conditions each view on a world-scenario video composed
of projected lane/map polylines and 3D object bounding boxes, rather than captured RGB
footage, to specify scene layout and viewpoint. During training, we render this representation
at the novel camera poses and feed its encoded latents into SymRegFlow through the backbone's
control branch. Both Cosmos-Drive-Dreams \citep{Ren2025a} and nuScenes \citep{nuscenes}
provide the map annotations, object geometry, and camera calibration required for this rendering.

To minimize modifications to the pretrained backbone, we retain its integer-indexed embeddings
for existing rig cameras. Our autonomous-driving data synthesis setting requires novel views
interpolated along the six-camera ring. We therefore parameterize the continuous view embedder
by the endpoint camera identities and a scalar ring coordinate, with additional geometric
conditioning supplied by the world-scenario renderings. The pretrained camera embeddings remain
frozen; Fourier features of the global ring coordinate drive a zero-initialized,
endpoint-vanishing MLP residual added to the linearly interpolated embeddings
($\alpha=0.5$ at the midpoint; Sec.~\ref{app:global-view-encoding}).

This design specializes the general pose-conditioned formulation to the interpolation setting
evaluated here. The formulation is compatible with arbitrary target-pose inputs: $e_\phi$ can
instead encode camera extrinsic and intrinsic matrices, Pl\"ucker rays, or other pose
representations and be trained jointly during fine-tuning.

\paragraph{Sampling.}
The nuScenes results in Tables~\ref{tab:quantitative_comparison} and~\ref{tab:ablation-quality}
use 35 denoising steps, guidance scale 3, and seed 42. Table~\ref{tab:quantitative_comparison}
reports both \emph{joint-generated} inference, without observed-RGB source-slot replacement,
and \emph{source-cond.} inference, with replacement during the first 30 steps.
Table~\ref{tab:ablation-quality} uses \emph{joint-generated} inference only.

\paragraph{Distributional metrics.}
Novel-view FID \citep{Heusel2017} and FVD \citep{Unterthiner2018} compare generated clips
with unique captured source-view clips as
\emph{unpaired} distributions; captured midpoint RGB is unavailable, and world-scenario
renderings are never used as RGB ground truth. FID uses 2,048-dimensional Inception features,
and FVD uses 400-dimensional I3D features from 29-frame clips, pooling all scenes in each study.

\paragraph{Cross-view consistency.}
MEt3R \citep{met3r} projects FeatUp-upsampled DINO features through MASt3R geometry and
averages cosine distances over usable overlap and valid synchronized frame pairs. Images
preserve aspect ratio with a maximum long edge of 512 pixels. Valid pairs require at least
5\% raw overlap and 1\% usable overlap after excluding relative depth disagreements above
0.10. The reference views and temporal inputs differ between studies, as specified below.

\subsection{Evaluation Protocols}
\label{app:ablation-protocol}

\paragraph{Thirty-scene comparison.}
\label{app:nuscenes-protocol}
Table~\ref{tab:quantitative_comparison} pools outputs from eight scene-group adapters covering
30 nuScenes scenes, with six midpoint camera pairs per scene and frames 0--115 at 10 fps.
FVD uses 720 nonoverlapping 29-frame clips per distribution, starting at frames 0, 29, 58,
and 87; FID pools their 20,880 frames per distribution. VBench \citep{Huang2024} evaluates
the 720 generated clips independently, using all 29 frames of each clip.
FVMD \citep{fvmd} uses 180 sequences of 116 frames at 10 fps per distribution for
all six methods. Our sequences concatenate four independently generated 29-frame clips;
baselines retain the first 116 frames of each released video. We bilinearly resize frames
to $256\times256$ and use PIPS2 to track 400 points with 16 iterations per 16-frame window.
A stride of one frame yields 18,180 windows per distribution, including windows crossing
generation joins. We concatenate each window's velocity and acceleration histograms and
compute a single Fr\'echet distance between the generated novel-view and unpaired
captured-source feature distributions, pooling all windows across scenes.
Instance preservation (IP) \citep{Zheng2025} uses these same 180
116-frame sequences per model and the official fine-tuned Qwen2.5-VL-3B-Instruct entity-anomaly
detector. Each sequence is split into overlapping 2-second clips with a 1-second stride,
retaining all 20 frames per clip. Following the official duration rule, the ten clips start
at seconds 0--9 and cover frames 0--109, including all three generation joins; the final six
frames are omitted. A sequence scores one only if every clip is classified as free of
unnatural entity merging, duplication, or disappearance, and zero otherwise. We report
100 times the mean over all 180 sequences, with higher scores indicating better preservation.
MEt3R compares novel views with synchronized existing-view videos:
\emph{generated} videos for the \emph{joint-generated} result ($^\dagger$) and \emph{captured} videos
for all other rows. These settings measure internal cross-view consistency and agreement with
observations, respectively. Each uses 41,760 candidate frame pairs per model, averaging its
valid subset.

\paragraph{Four-scene ablation.}
Each variant in Table~\ref{tab:ablation-quality} uses one adapter trained jointly on
scene-0012, scene-0013, scene-0014, and scene-0015 with a $100{,}000$-step optimization
schedule and evaluated at step $2{,}048$. Evaluation uses the same scenes and four segment
starts, yielding 96 novel clips (2,784 frames) and 192 generated endpoint clips per model.
Novel-view FID/FVD use 96 unique captured source clips; existing-view metrics use paired
captured endpoints. VBench uses the 96 separate 29-frame clips, rather than 116-frame sequences.
MEt3R compares novel views with \emph{generated} endpoints over 192 clip pairs
and 5,568 synchronized frame pairs.

\subsection{Additional Ablation Results}
\label{app:video-ablation-plots}

\paragraph{Variants and checkpoint selection.}
$A_L$ and $A_R$ use individual source anchors; $A_M$ merges both RGB anchors using their
visibility masks. All three use DCVA. The dual-anchor variants $C_\gamma$ use four-stream
DCVA with $\gamma\in\{0,0.01,0.05,0.1,0.5,1,2\}$. At $\lambda_c=0.05$,
$C_{0.05}^{\mathrm{FA}}$ replaces DCVA with full attention, changing both attention and
training execution. All 11 reported variants are evaluated at step $2{,}048$; the configured
$C_{0.001}$ run lacks outputs at this checkpoint and is omitted. The following analyses
use the same four-scene measurements as Table~\ref{tab:ablation-quality}.

\paragraph{Hybrid radar summary.}
Figure~\ref{fig:lambda-ablation-radar} summarizes the five metrics in Table~\ref{tab:ablation-quality}:
existing-view FVD; novel-view FVD, VBench subject and background consistency, and MEt3R.
Ranks and min--max bounds use all $N=11$ table variants, while the display contains the seven
$C_\gamma$ models in the consistency-weight sweep.
For direction-aware rank $r$, $S_{\mathrm{rank}}=100(N+1-r)/N$; if $S_{\mathrm{mm}}$ is the direction-aware min--max score
on a 0--100 scale, the displayed radius is $S_{\mathrm{hyb}}=(S_{\mathrm{rank}}+S_{\mathrm{mm}})/2$. $C_{0.05}$ has the largest mean
profile in the sweep, 92.4, followed by $C_0$ at 71.3 and $C_{0.1}$ at 61.2. This descriptive
mean weights the five spokes equally and is not polygon area or a replacement for the individual metrics.

\begin{figure}[htbp]
\centering
\includegraphics[width=\textwidth]{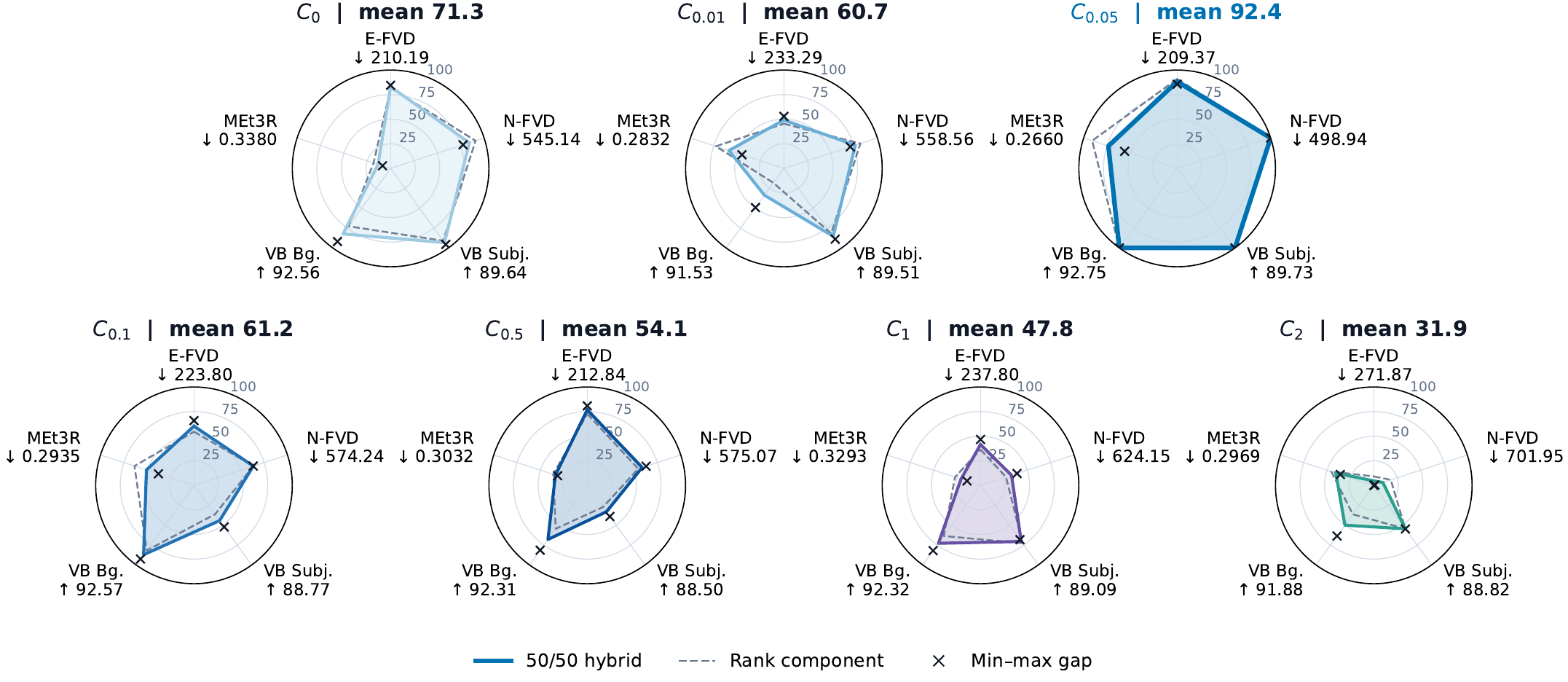}
\caption{Consistency-weight profiles using the five metrics in Table~\ref{tab:ablation-quality},
normalized against all 11 variants at iteration 2,048. Outward is better. Solid polygons show the
50/50 rank--min--max hybrid, dashed polygons its rank component, and crosses its min--max component.
E/N denote existing/novel views; VB denotes VBench. Spoke values follow the table's precision and scaling
(VBench $\times100$); the mean is the arithmetic average of the five normalized spokes.}
\label{fig:lambda-ablation-radar}
\end{figure}

\FloatBarrier

\paragraph{Computational-cost probe.}
\label{app:dcva-cost}
On one NVIDIA RTX 5880 Ada GPU, we compare backbone costs using 28 DiT blocks, width
2,048, 16 heads, rank-96 LoRA, BF16, batch size two, and activation checkpointing at every
block. Randomly initialized models receive $8\times16\times16$ tokens per view and retain
both anchor branches until backward. DCVA shares source-view computation, whereas full
attention recomputes it for each anchor. After one warm-up iteration, medians over three
forward/backward runs are $3.07$ and $4.51$ seconds, respectively ($1.47\times$).
This reduced-resolution probe excludes the control network, VAE, rotary embeddings, data
loading, optimizer, and distributed communication; it is not an end-to-end training benchmark.

\subsection{Attention-Bias Analysis}
\label{app:anchor-attention}

In a controlled one-scene diagnostic, each anchor strategy is trained for 300 iterations and
sampled with identical noise, prompt, and world-state controls. These measurements characterize
attention routing and are distinct from the four-scene benchmark in Table~\ref{tab:ablation-quality}.

\paragraph{Cross-view routing.}
We measure the absolute route gap
$G_{\mathrm{xv}}=|\bar\ell_{\mathrm{n}\rightarrow\mathrm{l}}-
\bar\ell_{\mathrm{n}\rightarrow\mathrm{r}}|$, where $\bar\ell$ is the mean post-RoPE,
pre-softmax QK logit over the captured denoising steps, DiT blocks, heads, and width, height, and
time offset profiles. Table~\ref{tab:anchor-attention-bias} reports gaps of $0.729$ and $0.865$
for $A_L$ and $A_R$, respectively, with preferences aligned to their fine-tuning anchors.
SymRegFlow reduces the gap to $0.307$, a reduction of $57.8\%$ and $64.5\%$, respectively.

\begin{table}[H]
\centering
\caption{Attention bias in the controlled one-scene overfitting experiment. The cross-view columns
report the absolute novel-to-left/right QK route gap $G_{\mathrm{xv}}$ (lower is more symmetric),
averaged over three relative-offset axes. The within-frame columns report invisible-to-visible
attention mass $P_{\mathrm{U}\rightarrow\mathrm{V}}$ under an identical paired visibility mask;
$\Delta$ and the relative excess are single-anchor minus SymRegFlow.}
\label{tab:anchor-attention-bias}
\footnotesize
\setlength{\tabcolsep}{4.2pt}
\renewcommand{\arraystretch}{1.08}
\begin{tabular}{@{}lccc cccc@{}}
\toprule
\multirow{2}{*}{\textbf{Paired anchor}}
& \multicolumn{3}{c}{\textbf{Cross-view route gap} $G_{\mathrm{xv}}$}
& \multicolumn{4}{c}{\textbf{Within-frame mass} $P_{\mathrm{U}\rightarrow\mathrm{V}}$} \\
\cmidrule(lr){2-4}\cmidrule(l){5-8}
& \textbf{Single} & \textbf{SymRegFlow} & \textbf{Reduction}
& \textbf{Single} & \textbf{SymRegFlow} & $\boldsymbol{\Delta}$ & \textbf{Relative excess} \\
\midrule
Left  & 0.729 & 0.307 & 57.8\% & 0.1745 & 0.1447 & +0.0298 & +20.6\% \\
Right & 0.865 & 0.307 & 64.5\% & 0.2091 & 0.1877 & +0.0214 & +11.4\% \\
\bottomrule
\end{tabular}
\end{table}

Figure~\ref{fig:anchor-offsets} retains the relative key--query offset to show where the imbalance
arises, reporting post-RoPE, pre-softmax QK compatibility along the latent width, height, and time axes. The
left-only model raises the novel-to-left profile relative to the right-only model, while the
right-only model raises the novel-to-right profile. SymRegFlow remains substantially closer across
the two routes. These profiles average the same five denoising steps, all 28 DiT blocks, and all
attention heads used in Table~\ref{tab:anchor-attention-bias}.

\begin{figure}[H]
\centering
\includegraphics[width=\textwidth]{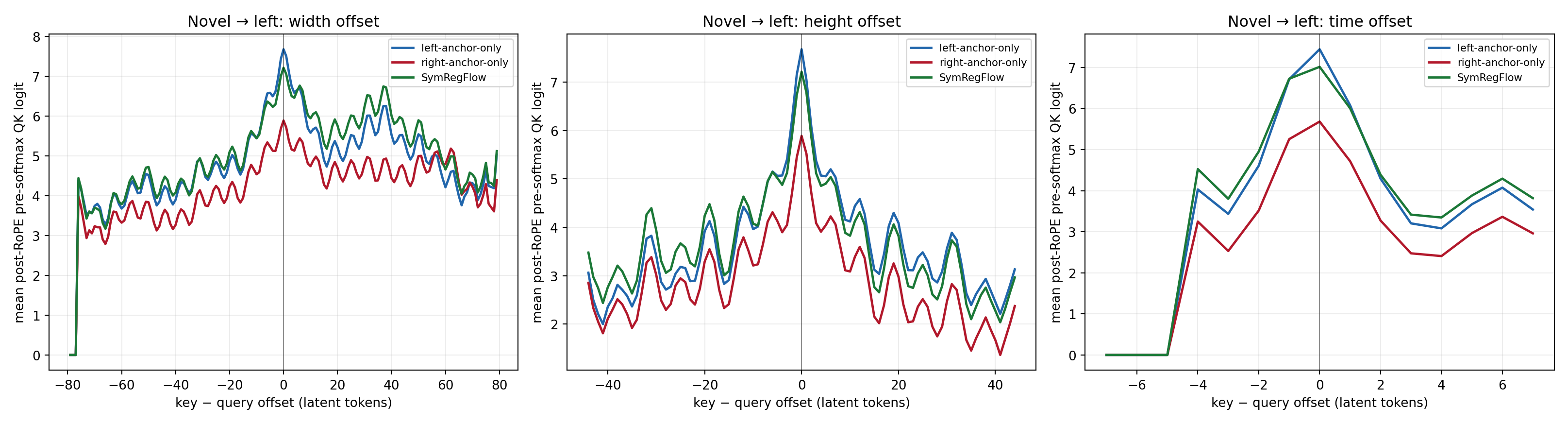}
\vspace{-3pt}
\includegraphics[width=\textwidth]{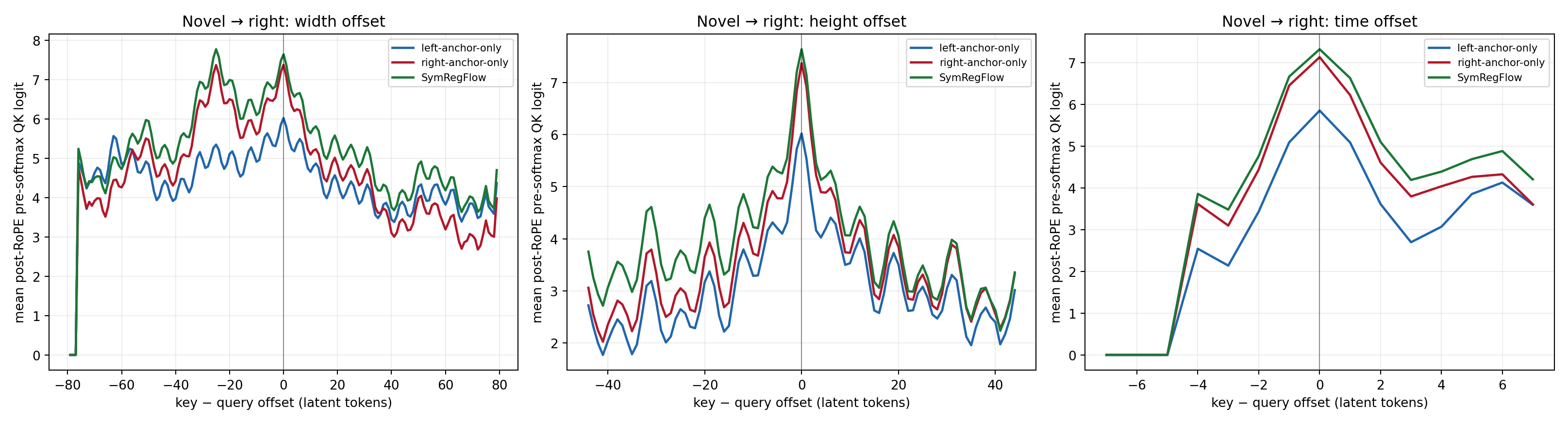}
\caption{Cross-view relative-offset QK profiles for novel-to-left (top) and novel-to-right
(bottom) routes. Curves are count-weighted means over valid token pairs and are further averaged
over captured denoising steps, DiT blocks, and heads. Single-anchor adaptation preferentially
raises compatibility with its fine-tuning source; SymRegFlow reduces this source-dependent route
imbalance.}
\label{fig:anchor-offsets}
\end{figure}

Figure~\ref{fig:anchor-denoising} retains the denoising step at zero spatial offset. The signed
gap reverses with the selected single anchor and remains source-aligned throughout denoising,
whereas SymRegFlow stays much closer to zero. Because these measurements are attention
diagnostics rather than controlled interventions on Q/K, they support a routing-bias explanation
but are not by themselves a causal proof of the repeated-content artifact.

\begin{figure}[H]
\centering
\includegraphics[width=\textwidth]{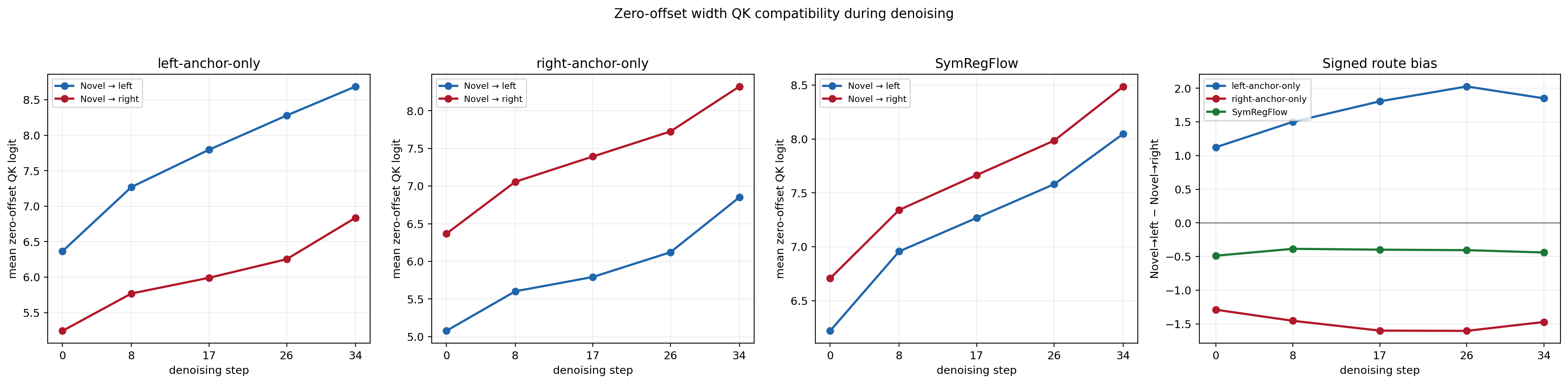}
\caption{Zero-offset cross-view QK compatibility over denoising. The rightmost panel reports the
signed route bias, novel-to-left minus novel-to-right. Its sign follows the fine-tuning anchor for
both single-anchor models, while SymRegFlow has a smaller, stable residual imbalance.}
\label{fig:anchor-denoising}
\end{figure}

\paragraph{Within-frame routing.}
Within the novel-view frame, we measure the same-frame softmax mass
$P_{\mathrm{U}\rightarrow\mathrm{V}}$ from all unambiguously invisible (inpainted) query tokens
$\mathrm U$ to anchor-visible keys $\mathrm V$. Each comparison applies the same left- or
right-anchor visibility mask to the single-anchor model and SymRegFlow. As reported in
Table~\ref{tab:anchor-attention-bias}, $A_L$ and $A_R$ assign $20.6\%$ and $11.4\%$ more mass
to the corresponding visible regions, respectively. Thus, the cross-view preference persists as
biased routing from generated content toward source-supported content inside the novel view.

\paragraph{Local repeated-content diagnostic.}
Figure~\ref{fig:anchor-local-copy} uses generated frame 28, with both attention panels measured at
denoising step 17 and DiT block 20. Red boxes mark queries in invisible (inpainted) regions; cyan
boxes mark the strongest same-size, non-overlapping attention match restricted to the fine-tuning
anchor's visible region (green). Heat maps show same-frame attention relative to uniform attention,
only on visible support. $A_M$ and SymRegFlow are shown as clean RGB controls without attention
measurements; the dashed line separates them from the two attention-analyzed single-anchor panels.

\FloatBarrier

\subsection{Global Continuous Novel-View Encoding}
\label{app:global-view-encoding}

Let $\mathbf e^k\in\mathbb R^D$ denote the frozen pretrained embedding of rig camera $k$, and assign
each camera a fixed circular coordinate $\xi^k\in[0,1)$. For the six-camera rig, front, front-left,
rear-left, rear, rear-right, and front-right are assigned coordinates $0,\frac16,\ldots,\frac56$,
respectively. For adjacent cameras $a$ and $b$ and
interpolation factor $\alpha\in[0,1]$, we define the shortest signed displacement and global novel-view
coordinate as
\begin{align}
    \Delta^{a,b}&=\operatorname{wrap}(\xi^b-\xi^a),\nonumber\\
    \xi^{a,b}(\alpha)&=\bigl(\xi^a+\alpha\Delta^{a,b}\bigr)\bmod 1,
\end{align}
where $\operatorname{wrap}(x)=((x+\tfrac12)\bmod 1)-\tfrac12$. Thus, the interpolation factor is
disambiguated by its position on the rig rather than represented in a pair-local space. We encode this circular coordinate
using seam-continuous fixed harmonics,
\begin{equation}
    \psi(\xi)=\bigl[\cos(2\pi k\xi),\sin(2\pi k\xi)\bigr]_{k=1}^{N_{\mathrm{harm}}},
\end{equation}
with $N_{\mathrm{harm}}=4$ in our implementation.

The continuous novel-view embedding is anchored to the pretrained camera-embedding space:
\begin{equation}
    \mathbf e^{a,b}(\alpha)
    =(1-\alpha)\mathbf e^a+\alpha\mathbf e^b
    +4\alpha(1-\alpha)r_\phi\!\left(\psi(\xi^{a,b}(\alpha))\right),
    \label{eq:global-novel-view-embedding}
\end{equation}
where $r_\phi:\mathbb R^{2N_{\mathrm{harm}}}\rightarrow\mathbb R^D$ is a small MLP whose output layer is initialized
to zero. The first two terms preserve the geometry of the pretrained embedding space, while the learned
residual models smooth deviations along the continuous camera ring. The envelope $4\alpha(1-\alpha)$
vanishes at both endpoints, guaranteeing $\mathbf e^{a,b}(0)=\mathbf e^a$ and
$\mathbf e^{a,b}(1)=\mathbf e^b$.

\section{Additional $\mathrm{SE}(2)$ Toy Study}
\label{app:toy}

\paragraph{Implementation.}
The base joint image transformer is trained for $8{,}000$ iterations on paired MNIST views at
$-90^\circ$ and $0^\circ$, using a batch size of $512$ and learning rate $2\times10^{-4}$.
Following JiT \citep{li2025backtobasics}, the toy implementation replaces $t$ by $\max(t,0.05)$
when converting clean predictions or targets to velocities, giving the supervised weight
$1/\max(t,0.05)^2$.
Every continuous-pose variant initializes from this same checkpoint and is fine-tuned in full for
$2{,}000$ iterations, with batch size $256$ and learning rate $2\times10^{-4}$. The target angle
is sampled uniformly through $\alpha\sim\mathcal U(0,1)$ and supplied through the learned scalar
pose embedding $e_\phi(\alpha)$. All variants use the same optimizer settings, anchor generator, and
source-preservation term. Denoising-output SymRegFlow uses Directed Cross-View
Attention (DCVA), as defined in Sec.~\ref{sec:equiv}, and evaluates its two anchor-conditioned
branches through the exactly equivalent four-stream factorization
$[x^1,A^1,A^2,x^2]$. We sweep
$\lambda_c\in\{0,0.1,0.25,0.5,1,2,4\}$ and select $\lambda_c=2$ by the mean
exact-target FD over the $45$ pose--distortion cells.

\paragraph{Baseline construction.}
For each sampled pose, SymRegFlow uses the two pseudo-examples
$\widetilde{\mathbf x}^{(s)}=(x^1,A^s,x^2)$ for the paired forward passes in Eq.~(\ref{eq:paired-forward}).
The $A^1$-only and $A^2$-only baselines each use the corresponding single pseudo-example;
Merged anchor uses $(x^1,A^{\mathrm M},x^2)$, where $A^{\mathrm M}$ is the mask-weighted fusion
of both anchors.

\begin{figure*}[t]
\centering
\includegraphics[width=\textwidth]{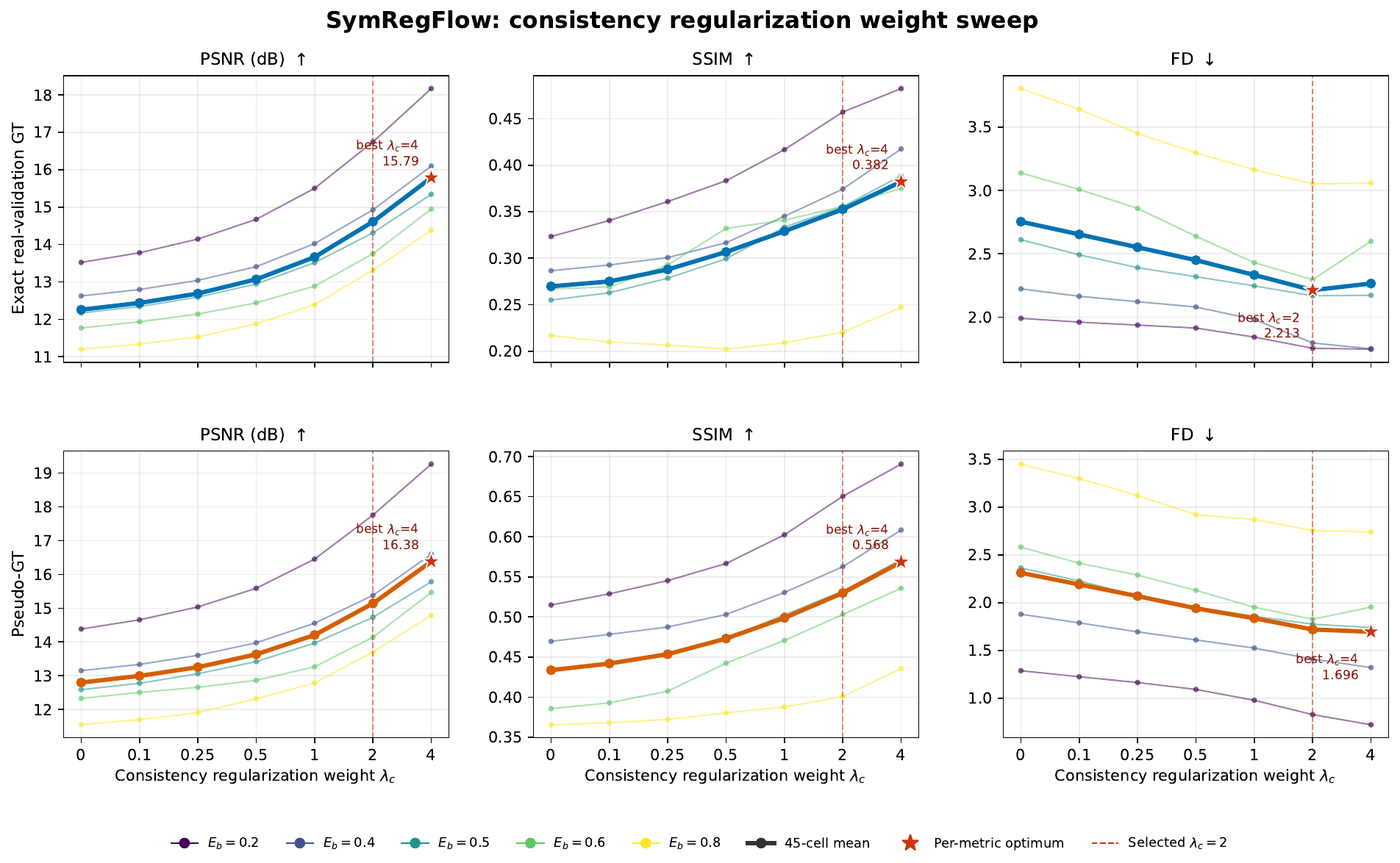}
\caption{SymRegFlow consistency regularization weight sweep against exact held-out rotations (top) and pseudo
targets (bottom). The dashed line indicates the selected $\lambda_c=2$; stars mark the best sampled
weight for each metric. FD denotes pixel-space Fr\'echet distance on $16\times16$ pixel features.}
\label{fig:toy-lambda-sweep}
\end{figure*}

\paragraph{Consistency regularization weight sensitivity.}
Figure~\ref{fig:toy-lambda-sweep} reports the complete sweep. Each thin curve averages the nine
interior target poses at one fixed distortion base $E_b$; each thick curve averages all five bases,
and therefore all $45$ pose--distortion cells. On exact targets, PSNR and SSIM continue increasing
through $\lambda_c=4$, reaching $15.79$ and $0.382$, respectively. Exact-target FD instead falls
from $2.754$ at $\lambda_c=0$ to its minimum of $2.213$ at $\lambda_c=2$, then rises to $2.265$ at
$\lambda_c=4$. We therefore select $\lambda_c=2$ using the exact-target FD criterion. All three
pseudo-target metrics continue improving through $\lambda_c=4$, consistent with stronger coupling
fitting the imperfect anchors more closely; their divergence from exact-target FD motivates
selection on withheld exact rotations.

\paragraph{Anchor corruption.}
The angular distances from $x^1$ and $x^2$ to the target at
$\vartheta_\alpha=-90^\circ(1-\alpha)$ are $90^\circ\alpha$ and $90^\circ(1-\alpha)$,
respectively. Normalizing by $90^\circ$ gives $d^1(\alpha)=\alpha$ and
$d^2(\alpha)=1-\alpha$. Let $E_b$ denote the base distortion level.
For $s\in\mathcal S=\{1,2\}$, we set the scale of $E^s$
to $E_b+d^s(\alpha)$, confidence to $w^s=1-d^s(\alpha)$, and loss mask to
$m^s=w^s\bar m^s$, where $\bar m^s$ is the binary sampling-validity mask. The warp perturbs
morphology, anisotropic scale and shear, angle, translation, a smooth deformation grid, and pixels
with additive noise.
Their magnitudes scale with $E_b+d^s(\alpha)$, using respective coefficients
$3^\circ$, $0.035$, $0.055$, and $0.040$ for the last four perturbations.
This construction is deliberately heterogeneous: the two anchors contain errors that are neither
identical nor independent in every pixel, while their expected severity increases with source--target
pose distance.

\paragraph{Evaluation protocol.}
We evaluate the nine interior poses $\alpha\in\{0.1,\ldots,0.9\}$ at base distortion levels
$E_b\in\{0.2,0.4,0.5,0.6,0.8\}$. Each pose--distortion cell contains $256$ samples generated with a
$50$-step Heun solver. The same base Gaussian samples (seed $2027$) are reused across methods and
poses. PSNR and SSIM compare \emph{source-cond.} predictions with exact rotations of held-out MNIST
validation images. Pixel-space Fr\'echet distance (FD) compares \emph{joint-generated} target-slot
samples with the corresponding validation distribution using flattened, bilinearly downsampled
$16\times16$ pixel features; no Inception features are used. Each method--distortion setting currently
has one fine-tuning seed; variation across base distortion levels therefore measures robustness to
anchor quality rather than optimization uncertainty.

\FloatBarrier

\clearpage
\begingroup
\raggedbottom
\section{Additional NuScenes Qualitative Examples}
\label{app:nuscenes-qualitative}

Figures~\ref{fig:nuscenes-qualitative-0012-front-right}--\ref{fig:nuscenes-qualitative-0014-back-left}
provide additional qualitative comparisons of novel-view generation on nuScenes.
Rows show DiST-4D, OmniRe, GEN3C, FreeVS, and SymRegFlow from top to bottom;
columns show frames from each generated video.

\begin{figure}[H]
\centering
\includegraphics[width=0.8\textwidth]{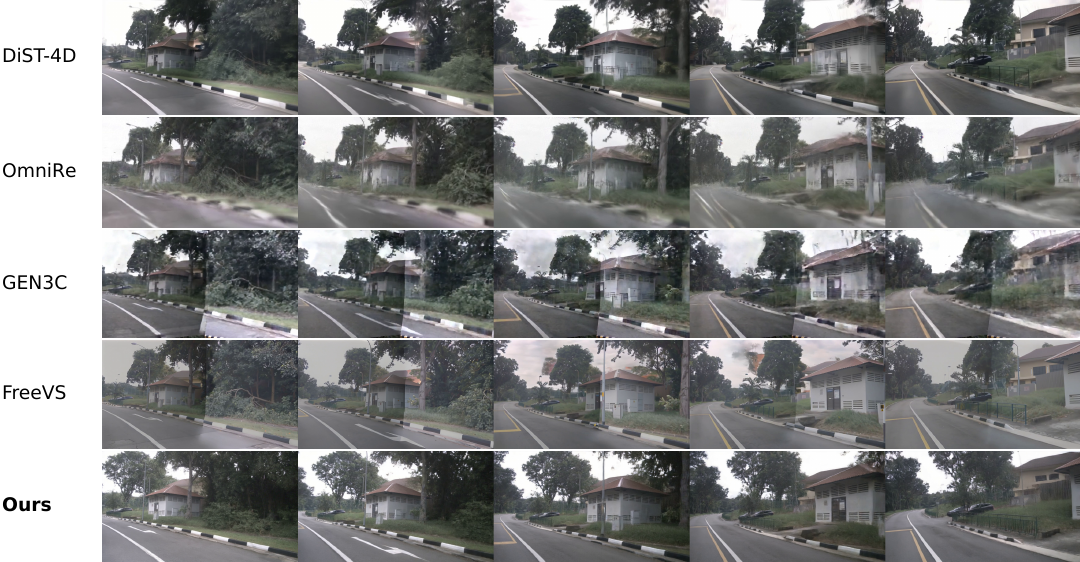}
\caption{Qualitative novel-view comparison on nuScenes scene-0012 between the front and
front-right cameras.}
\label{fig:nuscenes-qualitative-0012-front-right}
\end{figure}

\begin{figure}[H]
\centering
\includegraphics[width=0.8\textwidth]{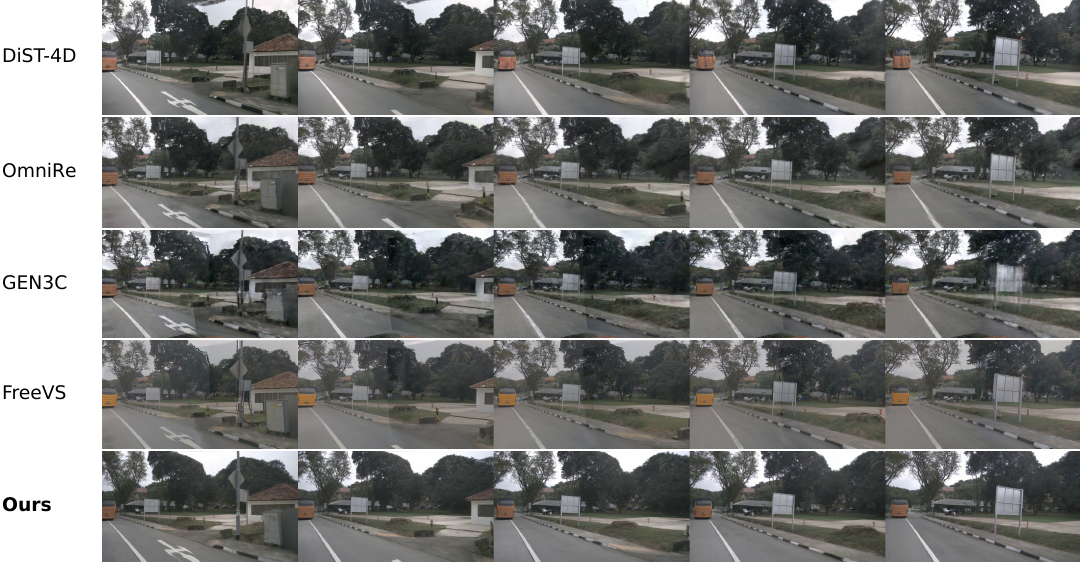}
\caption{Qualitative novel-view comparison on nuScenes scene-0013 between the front and
front-right cameras.}
\label{fig:nuscenes-qualitative-0013-front-right}
\end{figure}

\begin{figure}[H]
\centering
\includegraphics[width=0.8\textwidth]{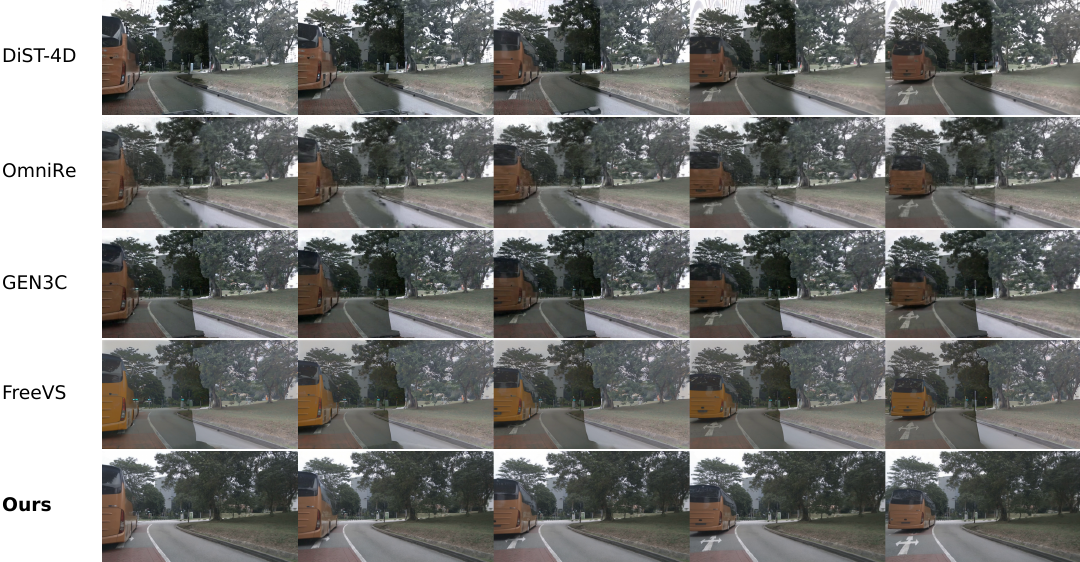}
\caption{Qualitative novel-view comparison on nuScenes scene-0014 between the front and
front-right cameras.}
\label{fig:nuscenes-qualitative-0014-front-right}
\end{figure}

\begin{figure}[H]
\centering
\includegraphics[width=0.8\textwidth]{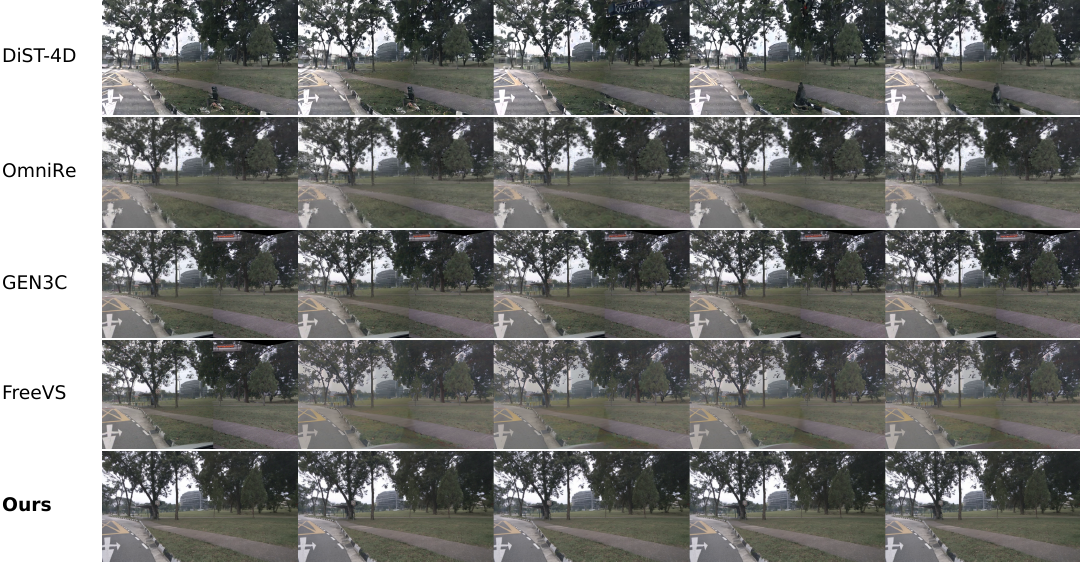}
\caption{Qualitative novel-view comparison on nuScenes scene-0014 between the back and
back-left cameras.}
\label{fig:nuscenes-qualitative-0014-back-left}
\end{figure}

\FloatBarrier
\clearpage
\endgroup

\end{document}